\documentclass[11pt, letterpaper]{article}
\usepackage{graphicx} 
\usepackage{mathpazo}
\usepackage{amssymb, amsthm, amsmath}
\usepackage{algorithm, algorithmicx, algpseudocode}
\usepackage{truncated}
\usepackage[
letterpaper,
top=1in,
bottom=1in,
left=1in,
right=1in]{geometry}
\usepackage[page,header]{appendix}
\usepackage{titletoc}
\usepackage{natbib}
\title{Efficient Learning of
Truncated Boolean Product Distributions: Influence to the Rescue}	
\author{\textbf{Rohan Chauhan}\footnote{University of California, Irvine \url{rmchauha@uci.edu}}\\ \small University of California, Irvine, \and \textbf{Ioannis Panageas}\footnote{University of California, Irvine, \url{ipanagea@uci.edu}, Supported by NSF grant CCF-2454115}\\ \small University of California, Irvine}
\newtheorem{theorem}{Theorem}[section]
\newtheorem*{theoremfatrel}{Theorem \ref{thm:fat_rel}}

\newtheorem*{theoreminf}{Theorem \ref{thm:influence}}
\newtheorem{lemma}{Lemma}[section]
\newtheorem{assumption}{Assumption}[section]
\newtheorem{proposition}{Proposition}[section]
\newtheorem{corollary}{Corollary}[section]

\newtheorem{definition}{Definition}[section]
\newcommand{\poly}{\mathsf{poly}}
\newtheorem{inftheorem}{Informal Theorem}

\newenvironment{proofthmfatrel}{%
  \proof}{\endproof}
\newenvironment{prooflemfatrel}{%
  \proof}{\endproof}
\newenvironment{proofthmfatabs}{%
  \proof}{\endproof}
\newenvironment{proofthmlower}{%
  \proof}{\endproof}
\newenvironment{proofthminf}{%
  \proof}{\endproof}

\date{}

\begin{document}




\maketitle

\begin{abstract}
Learning the natural parameters $z \in \mathbb{R}^n$ of discrete distributions $\mu_z$ from independent samples constrained to a subset $S \subseteq \{0,1\}^n$ is a foundational challenge in high-dimensional statistics. Existing methods for efficiently estimating truncated Boolean product distributions, notably the work of [Fotakis et al' COLT'20, Algorithmica '22], require either strong local connectivity assumptions on $S$ -- a property denoted \emph{fatness} -- or stringent anti-concentration assumptions and necessitate the total mass of the truncation set to be a constant with respect to $n$. Moreover, the results in [Fotakis et al' COLT'20, Algorithmica '22] suffer from sample complexities that scale as $\Omega(2^n)$ if the mass of $S$ is exponentially small in $n$. 

In this work, we circumvent these limitations by analyzing the geometry of $S$ under the measure $\mu_z$. We refine the existing parameter estimation guarantees under the fatness assumption, improving the prior sample complexity to $\mathcal{O}( \log n / \epsilon^2)$ for $\ell_\infty$-recovery, matching the untruncated minimax rate. We further generalize fatness using the notion of influence utilized in the analysis of Boolean functions and provide sufficient conditions for efficient inference. Notably, unlike previous work, our method does not require sampling at arbitrary parameterizations of the model. Lastly, we establish a theoretical lower bound demonstrating the sample complexity exhibits an intrinsic exponential dependence on the width of the model and the minimum distance between elements in the set. 
\end{abstract}
\newpage 
\newpage
\section{Introduction}

Learning from \emph{truncated samples} is a storied and challenging problem within the field of statistics, wherein the objective is to estimate the model parameters of the underlying true distribution given samples which lie on a subset $S$ of the support of the measure. Truncated samples appear in many fields, such as economics, engineering, biological sciences, and networks, among others, with classical examples including sampling bias when gathering subjects for medical experiments and survivor bias in actuarial analysis. Rigorous statistical estimation from truncated samples dates back at least to 1760, with Daniel Bernoulli's analysis of the efficacy of smallpox treatments \cite{bernoulli1760essai}. This analysis initiated a line of work investigated by Pearson, Galton, and Fisher \citep{galton1898examination, pearson1902systematic, pearson1908generalised, fisher1931properties}, which aimed to develop techniques to robustly estimate and test in truncated environments.

In recent years, there has been a flurry of activity in developing computationally and statistically efficient algorithms for learning both continuous and discrete distributions under truncation. Beginning with the work of \cite{daskalakis2018efficient} which analyzed the proper learning of multivariate Gaussians under a \emph{known} truncation set, provably efficient learning guarantees have been extended to multivariate Gaussians under unknown truncation \cite{kontonis2019efficient}, linear regression with truncated data \cite{daskalakis2019computationally}, and exponential families with known or unknown truncation \cite{lee2023learning, lee2024efficient} as well as other continuous settings. 

Motivated by the numerous examples of complex discrete truncated distributions in genomics \cite{eng2019transcriptome, ghosh2001lateral} and networks \cite{durvy2006packing, zafer2006blocking}, among others, \cite{fotakis2022efficientparameterestimationtruncated} initiated the study of estimating parameters of \textit{discrete} models from truncated samples, developing efficient algorithms for inference in truncated Boolean product distributions. Similar to the techniques used in the aforementioned works concerning truncated estimation in continuous settings, the authors demonstrated that a nontrivial survival mass, \(\mu(S)\ge \alpha\), and a global anti-concentration condition, such as \(\operatorname{Cov}_{\bx\sim\mu}(\bx)\succeq \lambda I\), with \(\alpha,\lambda\) being \(\Theta(1)\), are sufficient for inference. Yet, under these assumptions, the resulting sample complexity can scale as \(\poly(1/\alpha)^{\poly(1/\lambda)}\)\footnote{See Theorem 4 (p. 20) in the arXiv version (arXiv:2007.02392v3) of \cite{fotakis2022efficientparameterestimationtruncated}.}, which becomes prohibitive when \(\lambda=o(1)\) scales with the problem parameters. This stands in stark contrast to the associated lower bound found in \cite{fotakis2022efficientparameterestimationtruncated}, which has no dependence on $\alpha$ and scales on the order of $\poly(1/\lambda)$. Finding efficient algorithms which operate in this regime is a tantalizing open question. Moreover, the associated stochastic gradient methods require estimating gradients not only at the true parameter but throughout a neighborhood of candidate models, which entails rejection sampling from a family of truncated distributions \(\mu_{S}\) for many nearby parameters. Such a requirement is already delicate for product distributions and becomes substantially more untenable for general Markov random fields, where sampling is provably hard.

The algorithmic guarantees of \cite{fotakis2022efficientparameterestimationtruncated} which do not depend on the survival mass rely on local information; given samples $\bx \sim \mu$ that lie in $S$, the authors aim to estimate the underlying parameters via the conditional density $\pr_{\bx \sim \mu}(x_i\mid \bx_{-i})$. The idea of good ``average'' local connectivity for any given sample was formalized by \citep{fotakis2022efficientparameterestimationtruncated} into the notion of \emph{fatness}, which deems a distribution $\gamma$-fat if $\pr_{\bx \sim \mu}((1-x_i, \bx_{-i}) \in S) \ge 1/\poly(n) = \gamma$ for all $i \in [n]$. The authors then demonstrate efficient parameter estimation and sampling under this condition. However, in many sets of interest, such as the set of all Boolean vectors with an even number of ones (the parity set), no element has a single-flip neighbor, which violates the assumption and causes the conditional distributions to collapse.

Remedying this difference, in this work, we examine the task of efficient recovery of the parameters of \emph{truncated} product distributions over the hypercube $\{0,1\}^n$ -- the simplest class of discrete distributions -- given independent and identically distributed (i.i.d.) samples from the underlying model. We wish to develop a framework to analyze these distributions beyond restrictions on the mass of $S$, strong anti-concentration assumptions and single-flip local connectivity, leading to the following question: 
\begin{center}
   \textit{ Are there computationally efficient algorithms that can learn truncated Boolean product distributions without dependence on the survival mass and stringent local connectivity conditions?} 
\end{center}
Our primary contribution in this work is a positive answer to the above question by generalizing the notion of \emph{fatness} to capture a much wider range of truncation sets.

\subsection{Our Results}
Departing from approaches that require sampling from the untruncated distribution or estimating global gradients of the truncated likelihood, we instead exploit the local structure of the truncation set under the product measure. Let \(\mu_{\bp}\) be a Boolean product measure on \(\{0,1\}^n\), that is, a distribution whose coordinates are independent and \(\pr_{\bx \sim \mu_{\bp}}(x_i=1)=p_i\in(0,1)\). We assume oracle access to a truncation set \(S\subseteq\{0,1\}^n\) and observe samples from \(\mu_{\bp\mid S}\), namely, $\mu_{\bp}$ conditioned on $S$. Our primary goal is to compute an estimator \(\widehat{\bp}\) of the true parameter vector \(\bp\) such that, with probability at least \(1-\delta\), \(\|\widehat{\bp}-\bp\|_\infty\le \epsilon\), in time polynomial in \(n\), \(1/\epsilon\), and \(\log(1/\delta)\). We also consider estimation of the corresponding natural parameters \(\bz\), defined coordinatewise by \(z_i=\log(p_i/(1-p_i))\) for \(i\in[n]\). Learning in the \(\bp\)-parameterization corresponds to additive error in the Bernoulli probabilities, whereas learning in the \(\bz\)-parameterization corresponds to controlling errors in the log-odds, and hence gives a relative-error notion for the underlying probabilities.

Our first contribution is improving the sample complexity of estimation under the assumption that the truncated measure $\mu_{\bp\mid S}$ is $\gamma$-fat. Roughly speaking, \(\mu_{\bp\mid S}\) is \(\gamma\)-fat in coordinate \(i\) if, with probability at least \(\gamma\) over \(\bx\sim\mu_{\bp\mid S}\), the neighboring point \((1 - x_i, \bx_{-i})\) also lies in \(S\). This implies that on a \(\gamma\)-fraction of truncated samples, both values of the \(i\)-th coordinate are feasible while the remaining coordinates are fixed, so the conditional distribution of \(x_i\) reveals the corresponding one-dimensional marginal information of the original product distribution. In this sense, fatness is a conditional internal-connectivity condition on \(S\).

\begin{inftheorem}[Learning under $\gamma$-fatness]
    Let \(\bx^{(1)},\dots,\bx^{(N)}\) be i.i.d. samples from a $\gamma$-fat truncated Boolean product distribution $\mu$, and suppose $\|\bz\|_\infty\le R$. There are polynomial-time algorithms which estimate the probability vector to $\ell_\infty$-error $\epsilon$ from
    \[
    N=\mathcal O\left(\frac{\log(n/\delta)}{\gamma\epsilon^2}\right)
    \]
    samples and estimate the natural parameter vector to $\ell_\infty$-error $\epsilon$ from
    \[
    N=\mathcal O\left(\frac{e^{2R}\log(n/\delta)}{\gamma\epsilon^2}\right)
    \]
    samples, each with probability at least $1-\delta$.
\end{inftheorem} 

This sample complexity improves over previous work by a factor of $\log(n)$ and matches the minimax rate for estimating truncated Boolean product distributions when $\gamma \in \Theta(1)$, since estimating each coordinate to accuracy \(\epsilon\) requires order \(1/\epsilon^2\) samples and a \(\log(n/\delta)\) factor for uniform \(\ell_\infty\) control. Moreover, this bound depends only on the local fatness parameter \(\gamma\), and has no dependence on the global survival mass \(\mu_{\bp}(S)\). Assuming fatness, these algorithms give polynomial sample complexity even when the aforementioned anti-concentration parameter $\lambda \in o(1)$\footnote{The relationship between the width of the model $R$ and the anti-concentration parameter of \cite{fotakis2022efficientparameterestimationtruncated} is expounded on in Appendix \ref{app:fotakis}.}. 

However, the notion of fatness is too restrictive for many natural truncation sets. For example, the parity set \(S=\{x\in\{0,1\}^n:\sum_i x_i\equiv 0 \pmod 2\}\) has no single-coordinate connectivity. For every \(i \in [n], \bx \in S\), \(\pr_{\bx\sim\mu_{\bz\mid S}}((1 - x_i,\bx_{-i})\in S)=0\) implying any analysis based solely on single-bit flips is vacuous. Nevertheless, parity is preserved by two-bit flips, and conditioning on all coordinates outside \(\{i,j\}\) yields a nontrivial one-dimensional Bernoulli problem whose natural parameter is one of \(z_i+z_j\) or \(z_i-z_j\). This suggests replacing direct coordinate-wise estimation of \(z_i\) with the estimation of sparse linear forms \(\bw_I^\top \bz\), where \(\bw_I\in\{-1,0,1\}^n\) is supported on a small coordinate set \(I\). Formalizing the existence of such higher-order moves, we use the following notion of \emph{conditional influence}.
\begin{definition}[Conditional Influence]
\label{def:inf}
Given a truncated Boolean product distribution \(\mu_{\bz\mid S}\) and a coordinate set \(I\subseteq[n]\), define the conditional influence of \(I\) by
\[
\Inf_I^{\,\mu_{\bz\mid S}}
:=
\Pr_{\bx\sim\mu_{\bz\mid S}}\!\left[\bx^{\oplus I}\notin S\right],
\]
where \(\bx^{\oplus I}=(1 - \bx_I,\bx_{-I})\) denotes the vector obtained from \(x\) by flipping all coordinates in \(I\).
\end{definition}

Thus, \(\Inf_I^{\,\mu_{\bz\mid S}}\) is the probability that a sample from the truncated distribution becomes infeasible after the simultaneous flip of the coordinates in \(I\); equivalently, the feasibility probability of this flip is \(1-\Inf_I^{\,\mu_{\bz\mid S}}\). We recall that the conventional notion of Boolean influence records the probability that the value of a Boolean function changes under a specified flip. In the present setting, Definition~\ref{def:inf} is the one-sided version of this notion for the indicator of $S$, conditional on the original point lying in $S$.

Moreover, $\gamma$-fatness is equivalent to
\(\Inf_{\{i\}}^{\,\mu_{\bz\mid S}}\le 1-\gamma\) for every coordinate \(i\). Our framework replaces this single-coordinate requirement by a higher-order condition: even if every single-coordinate flip is infeasible, learning may remain possible provided that sufficiently many small sets \(I\) have feasibility probability \(1-\Inf_I^{\,\mu_{\bz\mid S}}\) bounded away from zero and the associated vectors \(\bw_I\) span \(\mathbb R^n\) in a quantitatively stable way.

For the model to be identifiable (see Assumption~1 of \cite{fotakis2022efficientparameterestimationtruncated}), we require that the affine span of $S$ is $\mathbb R^n$. This implies that there are feasible flip directions spanning $\mathbb R^n$ (see Lemma~\ref{lem:affine-span-nonzero-influence-pd}). Without any quantitative condition beyond identifiability, however, the corresponding feasibility probabilities can be exponentially small and the signed design can be arbitrarily ill-conditioned. We therefore impose two structural assumptions. The first ensures that a family of coordinate sets $\calF$ gives minimal feasible flips with probability at least $\gamma$. The second, signed anti-concentration, ensures that the induced quantities \(\bw_I^\top\bz\) collectively identify every direction of the parameter vector in a quantitatively stable manner.

Under these assumptions, the inference problem decomposes into three primary steps. First, using samples, we determine which sets of indices are feasible. Then, we select the sign of these feasible indices to provide estimates which are assembled into the linear inverse problem \(\widehat \by = W\bz\). If \(W\) is well conditioned, solving this system yields a stable estimate \(\widehat \bz\). In this sense, higher-order influence generalizes fatness by replacing direct access to individual coordinates with stable access to sufficiently many local signed linear measurements of the natural parameter. 

\begin{inftheorem}
For fixed \(k\), let \(\mathcal F\subseteq\{I\subseteq[n]:1\le |I|\le k\}\) satisfy the minimal-flip and signed-design assumptions with parameters $\gamma$ and $\lambda_s$, and suppose $\|\bz\|_\infty\le R$. There is a polynomial-time algorithm which, from
\[
N
\ge
\mathcal O\left(
\frac{2^k e^{2kR}}{\gamma\lambda_s\epsilon^2}
\left(k\log(2n)+\log\frac{4k}{\delta}\right)
\right)
\]
i.i.d. samples from $\mu_{\bz\mid S}$, returns $\hat{\bz}$ satisfying
\(\|\bz-\hat{\bz}\|_2\le\epsilon\) with probability at least \(1-\delta\).
\end{inftheorem}

We again note that our result does not depend on the survival mass of the set $S$, and instead leverages information found in the neighborhood of Hamming distance\footnote{The Hamming distance metric counts the number of differing indices between two vectors. Thus, a neighbor at Hamming distance $k$ is obtained by flipping exactly $k$ indices of $\bx$.} $k$. The polynomial sample requirement derives from the regression problem of using the sparse linear forms to estimate the underlying parameters, and depends on the condition number of the matrix $W = \begin{bmatrix}
    w_1 & w_2 & \cdots & w_m
\end{bmatrix}$; this is discussed in greater detail in Section~\ref{sec:influence}. Moreover, our sample complexity remains polynomial as long as $R \in \Theta(\log(n))$, making our methods effective for truncated Boolean product distributions beyond constant width.

Finally, we establish an information-theoretic lower bound showing that when \(\operatorname{Inf}_{I}^{\,\mu}(S)=1\) uniformly over all subsets \(I\subseteq[n]\) with \(|I|=k\), the sample complexity of learning necessarily scales exponentially in \(k\).

\begin{inftheorem}
    For every separation scale \(k\) with \(2k\mid n\), there exists a truncation set \(S\subseteq\{0,1\}^n\) whose distinct elements have Hamming distance at least \(k\), and hence no nonempty flip of fewer than \(k\) coordinates is feasible. For this set and \(0<\epsilon\le R\sqrt n/16\), there is a family of parameter vectors \(\bz^*\) with \(\|\bz^*\|_\infty\le R\) such that any estimator achieving \(\|\hat\bz-\bz^*\|_2\le\epsilon\) with constant probability requires
    \[
    N\ge \Omega\left(\frac{n\exp(kR/2)}{k^2\epsilon^2}\right)
    \]
    samples.
\end{inftheorem} 

The lower-bound construction consists of independent blocks of coordinates, each containing only a few admissible configurations, with any two admissible configurations separated in Hamming distance by \(\Omega(k)\). When the true parameter \(\bz^*\) is aligned with one admissible configuration, the conditional distribution places nearly all its mass on that configuration, while each informative alternative has probability at most \(\exp(-\Theta(kR))\). Samples therefore rarely visit the configurations required to distinguish nearby parameters. The obstruction is the local geometry: if transitions between feasible configurations require changing \(k\) coordinates and each coordinate contributes energy up to \(R\), then the effective barrier is \(kR\), yielding an \(\exp(\Theta(kR))\) sample-complexity penalty.
\subsection{Related Work}

Inference and estimation under truncation have long been subjects of interest in statistics \citep{galton1898examination, pearson1902systematic, pearson1908generalised, fisher1931properties}. More recently, numerous efficient algorithms have been developed for statistical learning in this challenging setting, from learning Gaussians \citep{daskalakis2018efficient, kontonis2019efficient, diakonikolas2024statistical}, to exponential families \cite{hannon1999estimation, lee2023learning, lee2024efficient, lee2025learning, karatapanis2025oracle}, linear regression under truncation \cite{daskalakis2019computationally, daskalakis2021efficient, kouridakis2026linear}, non-parametric density estimation \cite{daskalakis2021statistical}, mixtures of Gaussians \cite{nagarajan2020analysis, nagarajan2023mean, tai2023learning}, the Ising model \cite{chauhan2026learning}, and, of particular interest to this work, Boolean product distributions \cite{fotakis2022efficientparameterestimationtruncated, galanis2024learning, galanis2025oneshotlearningksat}, which have formalized the single-flip local connectivity of samples as \emph{fatness} and \emph{flippability}, respectively. Many of these works leverage projected stochastic gradient descent to learn the parameters of the model and utilize rejection sampling to compute an unbiased estimate of the gradient.

In the realm of discrete distributions and graphical models, there has been a parallel line of work leveraging logistic regression for parameter estimation in the Ising model \cite{wu2019sparselogisticregressionlearns, gaitonde2023unifiedapproachlearningising, klivans2017learninggraphicalmodelsusing, chandrasekaran2025learning} and higher-order Markov random fields \cite{zhang2020privately, gaitonde2024bypassingnoisyparitybarrier}. More generally, there have been works on estimating product distributions truncated to the hard-core gas model and graph colorings \cite{bhattacharya2021parameterestimationundirectedgraphical, galanis2024learning}, as well as learning constraint-satisfaction problems from positive examples \cite{feng2025learning}. Likewise, the literature on the analysis of Boolean functions over the hypercube is fundamental to theoretical computer science \cite{o2014analysis}. The notion of influence was introduced by \cite{penrose1946elementary} in the context of genetics and brought to the computer science literature by \cite{ben1987collective}. Building on this, \cite{kahn1989influence} connected the notion to harmonic analysis on the hypercube, thereby demonstrating that ``Boolean functions always have small sets of variables.'' More recently, \cite{diakonikolas2010bounding} investigated the influence of bounded-degree polynomial truncation functions over the uniform hypercube, \cite{KELLER_2011} studied the influence of biased distributions over the hypercube, and \cite{biswas2023influence} developed a novel formulation of influence to better represent multi-bit flips in applications to cryptography.

\section{Preliminaries}
\subsection{Notation}
We denote the set $\{1,2,\ldots,n\}$ by $[n]$. Vectors $\bx \in \R^n$ are denoted with boldface, and matrices $M \in \R^{m \times n}$ with capital letters. Given a vector $\mathbf{a} = (a_1, a_2, \dots , a_n)$ and a subset $I \subseteq [n]$, let $\mathbf{a}_I$ denote the length-$|I|$ coordinate vector $\{a_i : i \in I\}$, and $\mathbf{a}_{-i}$ denote the vector $\mathbf{a}$ with the $i$-th element removed. For $\bx\in\{0,1\}^n$, define its signed encoding by
\[
\bs(\bx):=2\bx-\mathbf 1\in\{\pm1\}^n.
\]
Thus, $s_i(\bx)=2x_i-1$ throughout the paper. We write $\bx^{\oplus I}$ for the vector obtained from \(\bx\) by flipping the coordinates in \(I\), i.e., \((x^{\oplus I})_j = 1-x_j\) if \(j \in I\), and \((x^{\oplus I})_j = x_j\) otherwise. We denote the family of nonempty coordinate subsets of size at most $k$ by $\calI_{\le k} := \{ I \subseteq [n]: 1 \le |I| \le k\}$.

\subsection{Boolean Product Distributions}
In this work, we study \emph{Boolean product distributions} on the binary hypercube $\Pi_n=\{0,1\}^n$, parameterized by $\bp=(p_1,\ldots,p_n)$. This model is the product of $n$ independent Bernoulli distributions\footnote{A Bernoulli distribution with parameter $p\in[0,1]$ assigns probability $p^x(1-p)^{1-x}$ to $x\in\{0,1\}$.}, $\mathcal{B}(p_1)\otimes\cdots\otimes\mathcal{B}(p_n)$, and can be written as the following exponential family:
\[
\pr_{\mu_{\bz}}(\bx) = \frac{\exp(\bx^\top \bz)}{\prod_{i \in [n]}(1 + \exp(z_i))} \tag{Boolean Product Distribution},
\] where $\bz=(z_1,\ldots,z_n)$ is the natural parameter vector of the model, with $z_i=\log(p_i/(1-p_i))$. Moreover, $\sigma(\bz)=\bp$, where $\sigma(x)=1/(1+\exp(-x))$ is the sigmoid function. Given a
subset $S\subseteq\{0,1\}^n$ of the Boolean hypercube, the mass that $\mu_{\bz}$ assigns to $S$ is $\mu_{\bz}(S)=
\sum_{\bx \in S} \mu_{\bz}(\bx)$. Throughout this work, we refer to the Boolean product distribution interchangeably as
$\mu_{\bp}$, $\mu_{\bz}$ and $\mu$, depending on whether the probability parameter $\bp$ or the natural parameter $\bz$ is the focus or understood by context.

Given a Boolean product distribution $\mu_{\bz}$ and a set $S\subseteq\{0,1\}^n$ with $\mu_{\bz}(S)>0$, we define the \emph{truncated} Boolean product distribution by
\[
\mu_{\bz\mid S}(\bx)
:=
\frac{\mu_{\bz}(\bx)\mathbf{1}\{\bx\in S\}}{\mu_{\bz}(S)}.
\]
We refer to $S$ as the \emph{truncation set}. We say that $\mu_{\bz}$ is \emph{identifiable} from $\mu_{\bz\mid S}$ if the map $\bz\mapsto\mu_{\bz\mid S}$ is injective. Likewise, $\mu_{\bz}$ is \emph{efficiently learnable} from $\mu_{\bz\mid S}$ if there is an algorithm which, using samples only from $\mu_{\bz\mid S}$, returns an estimate $\hat{\bz}$ satisfying $\|\bz-\hat{\bz}\|_2\le\epsilon$ with probability at least $1-\delta$, in time and sample complexity polynomial in $n$, $1/\epsilon$, and $\log(1/\delta)$.

\subsection{Distances between Distributions}
Let $\calP, \calQ$ be probability measures over a discrete probability space $(\Omega, \calF)$. Two quantities of interest are the \emph{total variation distance},
\[
D_{\mathrm{TV}}(\calP,\calQ)
:=
\frac12\sum_{\bx\in\Omega}|\calP(\bx)-\calQ(\bx)|,
\]
and the \emph{Kullback--Leibler divergence},
\[
D_{\mathrm{KL}}(\calP\|\calQ)
:=
\ex_{\bx\sim\calP}\left[\log\frac{\calP(\bx)}{\calQ(\bx)}\right].
\]
For Boolean product distributions, both quantities can be controlled directly through the natural parameters.
\begin{proposition}
    Let $\mu_{\bz}$ and $\mu_{\bz'}$ be two Boolean product distributions with natural parameters $\bz$ and $\bz'$, respectively. Then
\begin{itemize}
    \item $D_{\mathrm{KL}}(\mu_{\bz}\|\mu_{\bz'}) \le \frac18\|\bz-\bz'\|_2^2$.
    \item $D_{\mathrm{TV}}(\mu_{\bz},\mu_{\bz'}) \le \frac14\|\bz-\bz'\|_2$.
\end{itemize}
\end{proposition}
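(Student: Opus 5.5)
Both bounds reduce to one-dimensional computations, because the KL divergence tensorizes over product measures and TV is controlled by KL through Pinsker's inequality.

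\emph{Step 1 (tensorization).} Since $\mu_{\bz}=\bigotimes_i \mathcal B(\sigma(z_i))$ and $\mu_{\bz'}=\bigotimes_i \mathcal B(\sigma(z_i'))$, the log-likelihood ratio splits as a sum over coordinates. Hence
\[
D_{\mathrm{KL}}(\mu_{\bz}\|\mu_{\bz'})=\sum_{i=1}^n D_{\mathrm{KL}}\bigl(\mathcal B(\sigma(z_i))\,\|\,\mathcal B(\sigma(z_i'))\bigr),
\]
and it suffices to show $D_{\mathrm{KL}}(\mathcal B(\sigma(a))\|\mathcal B(\sigma(b)))\le \tfrac18 (a-b)^2$ for scalars $a,b$.

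\emph{Step 2 (one-dimensional KL as a Bregman divergence).} Write the Bernoulli family in exponential form with log-partition function $A(t)=\log(1+e^t)$. Then $A'=\sigma$ and $A''(t)=\sigma(t)(1-\sigma(t))\le \tfrac14$. For exponential families the KL divergence is the Bregman divergence of $A$:
\[
D_{\mathrm{KL}}(\mathcal B(\sigma(a))\|\mathcal B(\sigma(b)))=A(b)-A(a)-A'(a)(b-a).
\]
To check this, expand $\ex_a[x(a-b)]-A(a)+A(b)$ and note $\ex_a[x]=\sigma(a)=A'(a)$. By Taylor's theorem with Lagrange remainder, this equals $\tfrac12 A''(\xi)(b-a)^2$ for some $\xi$ between $a$ and $b$, which is at most $\tfrac18(a-b)^2$. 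Summing over coordinates gives the first bullet.

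\emph{Step 3 (TV bound).} Pinsker's inequality gives $D_{\mathrm{TV}}\le\sqrt{D_{\mathrm{KL}}/2}\le \sqrt{\|\bz-\bz'\|_2^2/16}=\tfrac14\|\bz-\bz'\|_2$, which is the second bullet.

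The main point to get right is the Bregman identity in Step 2, specifically the sign and direction of the divergence, so that the remainder is evaluated along the segment between $a$ and $b$. Once that identity is in place, the uniform bound $\sigma(1-\sigma)\le 1/4$ closes both inequalities.
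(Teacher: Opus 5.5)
Your proposal is correct and follows essentially the same route as the paper: you write the KL divergence as the Bregman divergence of the log-partition function, bound the Taylor remainder using $\sigma(1-\sigma)\le \tfrac14$, and then apply Pinsker. The only difference is cosmetic. You tensorize first and work with the scalar $A(t)=\log(1+e^t)$, whereas the paper uses the multivariate $A(\bz)=\sum_i\log(1+e^{z_i})$ and its diagonal Hessian $\preceq \tfrac14 I_n$; these amount to the same computation.
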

\begin{proof}
The log-partition function of the product family is
$A(\bz)=\sum_{i=1}^n\log(1+e^{z_i})$, and
\[
\nabla^2 A(\bz)
=
\operatorname{diag}\bigl(\sigma(z_i)(1-\sigma(z_i)):i\in[n]\bigr)
\preceq \frac14 I_n.
\]
The KL divergence is the Bregman divergence
$A(\bz')-A(\bz)-\langle\nabla A(\bz),\bz'-\bz\rangle$.
Taylor's theorem therefore gives the first claim. The second follows from
Pinsker's inequality,
$D_{\mathrm{TV}}(\calP,\calQ)\le
\sqrt{D_{\mathrm{KL}}(\calP\|\calQ)/2}$.
\end{proof}

\section{Boolean Product Distributions Truncated by $\gamma$-Fat Sets \label{sec:fat}}
As a warm-up to the techniques we introduce for learning truncated Boolean product distributions, we study how the notion of \emph{fatness} introduced by \citep{fotakis2021efficient} can guarantee efficient parameter estimation. We give the formal definition as follows.
\begin{assumption}[Fatness]
    A truncated distribution $\mu_{\bz\mid S}$ is $\gamma$-fat in coordinate $i$ if \[
    \pr_{\bx \sim \mu_{\bz\mid S}}(\bx^{\oplus i} \in S )  \ge \gamma.
    \] We call a distribution $\mu_{\bz\mid S}$ $\gamma$-fat if it is $\gamma$-fat in every coordinate.
\end{assumption}
\emph{Remark:} If $\mu_{\bz\mid S}$ is fat in coordinate $i$, then both $(\bx_{-i},0)$ and $(\bx_{-i},1)$ lie in the truncation set $S$ with nontrivial probability. This notion is already quite general and includes halfspaces $S_{\le k}=\{\bx\in\{0,1\}^n:\sum_{i=1}^n x_i\le k\}$ and \emph{downward-closed} sets, among others.

We give efficient parameter estimation algorithms for $\bz$ and $\bp$ under this assumption. 
\begin{theorem}[Learning $\mu$ in Relative Error]
\label{thm:fat_rel}
    Let $\bx^{(1)},\ldots,\bx^{(N)}$ be independent random variables drawn from a $\gamma$-fat distribution $\mu_{\bz\mid S}$, where $\|\bz\|_\infty\le R$. For every $\epsilon,\delta\in(0,1)$, if
    \[
    N\ge
    C\frac{n e^{2R}}{\gamma\epsilon^2}
    \log\left(\frac{2n}{\delta}\right)
    \]
    for a universal constant $C>0$, then there is an algorithm that returns an estimate $\hat{\bz}$ satisfying
    $\|\bz-\hat{\bz}\|_2\le\epsilon$ with probability at least $1-\delta$.
\end{theorem}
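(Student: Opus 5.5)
The plan is to estimate each $z_i$ separately from the samples whose $i$-th neighbor is feasible, and then combine the coordinate errors into an $\ell_2$ bound. The key identity is this. For $\bx\in S$ with $\bx^{\oplus i}\in S$, conditioning on $\bx_{-i}$ and on the event $E_i=\{\bx^{\oplus i}\in S\}$ gives
\[
\Pr_{\mu_{\bz\mid S}}\bigl(x_i=1 \mid \bx_{-i},\,E_i\bigr)=\frac{\mu_{\bz}(\bx_{-i},1)}{\mu_{\bz}(\bx_{-i},0)+\mu_{\bz}(\bx_{-i},1)}=\sigma(z_i)=p_i.
\]
This holds because the product structure makes the ratio $\mu_{\bz}(\bx_{-i},1)/\mu_{\bz}(\bx_{-i},0)=e^{z_i}$, independent of $\bx_{-i}$, and the normalization $\mu_{\bz}(S)$ cancels. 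Moreover, $E_i$ depends only on $\bx_{-i}$, since it says both completions lie in $S$. Averaging over $\bx_{-i}$ therefore gives $\Pr(x_i=1\mid E_i)=p_i$.

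With oracle access to $S$, the algorithm will, for each $i$, collect the subsample $N_i=\{t:\bx^{(t),\oplus i}\in S\}$ and set $\hat p_i$ to be the empirical mean of $x^{(t)}_i$ over $N_i$. It then outputs $\hat z_i=\sigma^{-1}(\operatorname{clip}(\hat p_i))$, where the clipping is to $[\sigma(-R),\sigma(R)]$.

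The analysis proceeds in three steps.
\begin{enumerate}
\item \emph{Subsample size.} By $\gamma$-fatness, $|N_i|\sim\mathrm{Bin}(N,\pi_i)$ with $\pi_i\ge\gamma$. A multiplicative Chernoff bound and a union bound over $i$ give $|N_i|\ge \gamma N/2$ for all $i$ with probability $1-\delta/2$, as soon as $N\gtrsim \log(2n/\delta)/\gamma$.
\item \emph{Concentration of $\hat p_i$.} Conditional on the index set $N_i$, the values $x^{(t)}_i$ for $t\in N_i$ are i.i.d.\ Bernoulli$(p_i)$. This is the point that needs care, and I will justify it by the independence of the samples together with the conditional identity above. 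Hoeffding's inequality and a union bound then give $|\hat p_i-p_i|\le \eta$ for all $i$ with probability $1-\delta/2$, provided $\gamma N\eta^2\gtrsim \log(2n/\delta)$.
\item \emph{Transfer to natural parameters and to $\ell_2$.} On $[\sigma(-R),\sigma(R)]$ the derivative of $\sigma^{-1}$ is $1/(p(1-p))\le (1+e^R)^2/e^R\le 4e^R$. Clipping onto this interval, which contains $p_i$, is a contraction, so $|\hat z_i-z_i|\le 4e^R\eta$. Choosing $\eta=\epsilon e^{-R}/(4\sqrt n)$ yields $\|\hat\bz-\bz\|_\infty\le\epsilon/\sqrt n$, hence $\|\hat\bz-\bz\|_2\le\epsilon$. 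Substituting this $\eta$ into the condition of the previous step gives exactly $N\ge C\,n e^{2R}\log(2n/\delta)/(\gamma\epsilon^2)$. The same requirement also covers the bound $N\gtrsim \log(2n/\delta)/\gamma$ needed in the first step.
\end{enumerate}

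The main obstacle is the conditional independence claim in the second step. Selection into $N_i$ is random and correlated with $\bx_{-i}$, so one must argue that it does not bias $x_i$. The cleanest route is to avoid conditioning on random index sets altogether. Instead, work with the i.i.d.\ pairs $(\mathbf 1_{E_i}(\bx^{(t)}),\,x^{(t)}_i\mathbf 1_{E_i}(\bx^{(t)}))$, whose means are $\pi_i$ and $p_i\pi_i$. Concentrate both empirical means to within $\gamma\eta/3$, then bound the error of the ratio using the lower bound $\pi_i\ge\gamma$. This gives the same rate up to constants and sidesteps the conditioning subtlety entirely.
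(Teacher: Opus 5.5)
Your proposal is correct and is essentially the paper's proof. Since $\widehat{\mathcal L}_i'(w)=\sigma(w)-\hat p_i$, your estimator $\sigma^{-1}(\operatorname{clip}(\hat p_i))$ is exactly the constrained logistic minimizer over $[-R,R]$ that the paper computes, and your Lipschitz bound on $\sigma^{-1}$ is the same estimate as the paper's curvature/gradient-sign argument at $z_i\pm\epsilon$; the Chernoff bound on the flippable count, Hoeffding conditional on the selected indices, and the union bound at accuracy $\epsilon/\sqrt n$ are also the same. The conditioning in your step 2 is legitimate as stated (the paper does exactly this), so the ratio-of-means detour is unnecessary, and as written it would not give the same rate: additive accuracy $\gamma\eta/3$ can require order $\log(n/\delta)/(\gamma^2\eta^2)$ samples (e.g.\ when $\pi_i\approx 1/2\gg\gamma$). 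To keep the $1/\gamma$ dependence there you would need accuracy $\pi_i\eta/3$ together with a multiplicative Chernoff/Bernstein bound.
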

The relationship between natural-parameter error and distributional distance yields the following corollary for learning in total variation distance.
\begin{corollary}[Learning $\mu$ in Total Variation Distance]
    Under the assumptions of Theorem~\ref{thm:fat_rel}, there is a universal constant $C>0$ such that
    \[
    N\ge
    C\frac{n e^{2R}}{\gamma\epsilon^2}
    \log\left(\frac{2n}{\delta}\right)
    \]
    samples suffice to return a Boolean product distribution $\hat\mu$ satisfying
    $D_{\mathrm{TV}}(\mu_{\bz},\hat\mu)\le\epsilon$ with probability at least $1-\delta$.
\end{corollary}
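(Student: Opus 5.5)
The statement to prove is the Corollary. The plan is to reduce it directly to Theorem~\ref{thm:fat_rel} and the Proposition bounding total variation by natural-parameter distance.

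First, run the algorithm of Theorem~\ref{thm:fat_rel} with accuracy $\epsilon' := 4\epsilon$. If $4\epsilon \ge 1$, use instead any $\epsilon'\le \min(4\epsilon,1)$ that is a constant multiple of $\epsilon$, so that the theorem's hypothesis $\epsilon'\in(0,1)$ holds.

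The sample requirement of the theorem at accuracy $\epsilon'$ is
\[
C\frac{ne^{2R}}{\gamma\epsilon'^2}\log\left(\frac{2n}{\delta}\right).
\]
This is at most the corollary's requirement after adjusting the universal constant $C$, since $\epsilon'$ is a constant multiple of $\epsilon$. Hence, with probability at least $1-\delta$, we obtain $\hat{\bz}$ with $\|\bz-\hat{\bz}\|_2\le\epsilon'\le 4\epsilon$.

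Next, output $\hat\mu := \mu_{\hat{\bz}}$, the Boolean product distribution with natural parameter $\hat{\bz}$. The estimator may violate $\|\hat{\bz}\|_\infty\le R$, but this does not matter: the Proposition holds for arbitrary $\bz,\bz'\in\mathbb{R}^n$.

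Finally, apply the second bullet of the Proposition on the same event:
\[
D_{\mathrm{TV}}(\mu_{\bz},\hat\mu)\le \tfrac14\|\bz-\hat{\bz}\|_2\le \epsilon .
\]
This holds with probability at least $1-\delta$, which is the claim.

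The only point needing care is the constant bookkeeping, namely that rescaling $\epsilon$ by $4$ and truncating at $1$ is absorbed into $C$. There is no real obstacle beyond that.
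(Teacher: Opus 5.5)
Your proposal is correct and matches the paper's proof: invoke Theorem~\ref{thm:fat_rel} at natural-parameter accuracy $4\epsilon$ (switching to a capped constant target when $4\epsilon\ge 1$; the paper uses $1/2$ for $\epsilon>1/4$), output $\mu_{\hat{\bz}}$, apply $D_{\mathrm{TV}}\le\frac14\|\bz-\hat{\bz}\|_2$ from the proposition, and absorb the rescaling into $C$. Your split at $4\epsilon\ge1$ is slightly cleaner than the paper's. The paper's split at $\epsilon\le 1/4$ would, at $\epsilon=1/4$, invoke the theorem at accuracy $1\notin(0,1)$.
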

\begin{proof}
For $\epsilon\le1/4$, apply Theorem~\ref{thm:fat_rel} with target natural-parameter error $4\epsilon$ and set $\hat\mu:=\mu_{\hat{\bz}}$. The proposition in Section~2 gives
\[
D_{\mathrm{TV}}(\mu_{\bz},\hat\mu)
\le
\frac14\|\bz-\hat{\bz}\|_2
\le
\epsilon.
\]
Absorbing the numerical factor into $C$ gives the stated sample bound. For $\epsilon>1/4$, run Theorem~\ref{thm:fat_rel} with natural-parameter target error $1/2$; this yields TV error at most $1/8<\epsilon$, and the stated sample bound follows after increasing $C$.
\end{proof}
Lastly, we give estimation results for learning the underlying probabilities $\bp$; note that the sample complexity depends only on $\epsilon$ and the fatness of the distribution.
\begin{theorem}[Learning $\mu$ in Absolute Error]
\label{thm:fat_abs}
    Let $\bx^{(1)},\ldots,\bx^{(N)}$ be independent random variables drawn from a $\gamma$-fat distribution $\mu_{\bp\mid S}$. For every $\epsilon,\delta\in(0,1)$, there is an algorithm returning $\hat\bp\in[0,1]^n$ such that $\|\bp-\hat\bp\|_2\le\epsilon$ with probability at least $1-\delta$, provided
    \[
    N\ge
    C\frac{n}{\gamma\epsilon^2}
    \log\left(\frac{2n}{\delta}\right).
    \]
    Moreover, the same algorithm satisfies $\|\bp-\hat\bp\|_\infty\le\epsilon$ with probability at least $1-\delta$, provided
    \[
    N\ge
    C\frac{1}{\gamma\epsilon^2}
    \log\left(\frac{2n}{\delta}\right).
    \]
\end{theorem}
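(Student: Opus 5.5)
The estimator uses only the pairs of samples whose coordinate-$i$ flip stays feasible. For each coordinate $i$ and sample $\bx^{(j)}$, we query the membership oracle to test whether $(\bx^{(j)})^{\oplus i}\in S$. Let $A_i=\{j:(\bx^{(j)})^{\oplus i}\in S\}$ be the set of \emph{flippable} samples. The key identity is the following. Conditioned on $\bx\sim\mu_{\bp\mid S}$, on $\bx_{-i}$, and on the event that both $(\bx_{-i},0)$ and $(\bx_{-i},1)$ lie in $S$, the coordinate $x_i$ is exactly $\mathcal B(p_i)$. This holds because the truncated density is proportional to $\prod_j p_j^{x_j}(1-p_j)^{1-x_j}$ on $S$, so the ratio of the two completions is $p_i/(1-p_i)$. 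The event $j\in A_i$ depends only on $\bx^{(j)}_{-i}$, since it is equivalent to both completions lying in $S$. Hence, conditioned on the index set $A_i$ and on $\{\bx^{(j)}_{-i}\}$, the values $\{x^{(j)}_i\}_{j\in A_i}$ are i.i.d.\ $\mathcal B(p_i)$. We set
\[
\hat p_i=\frac{1}{|A_i|}\sum_{j\in A_i}x^{(j)}_i,
\]
or $\hat p_i=1/2$ if $A_i=\emptyset$. This estimate lies in $[0,1]$.

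The proof then has two concentration steps followed by a union bound.

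\textbf{Step 1 (enough flippable samples).} By $\gamma$-fatness, $|A_i|$ is a sum of $N$ i.i.d.\ Bernoulli variables with mean at least $\gamma$. A multiplicative Chernoff bound gives $|A_i|\ge \gamma N/2$ except with probability $e^{-\gamma N/8}$. This failure probability is at most $\delta/(2n)$ once $N\ge C\gamma^{-1}\log(2n/\delta)$.

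\textbf{Step 2 (accuracy of each $\hat p_i$).} Conditionally on $A_i$, Hoeffding's inequality gives $|\hat p_i-p_i|\le t$ except with probability $2e^{-2|A_i|t^2}\le 2e^{-\gamma N t^2}$.

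\textbf{Step 3 ($\ell_\infty$ bound).} Taking $t=\epsilon$, a union bound over the $n$ coordinates and both failure events requires $\gamma N\epsilon^2\gtrsim\log(2n/\delta)$. This is the stated $\ell_\infty$ sample bound.

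\textbf{Step 4 ($\ell_2$ bound).} Taking $t=\epsilon/\sqrt n$ gives $\|\hat\bp-\bp\|_2\le\sqrt n\,\|\hat\bp-\bp\|_\infty\le\epsilon$. This requires $N\gtrsim n\log(2n/\delta)/(\gamma\epsilon^2)$, which is the stated $\ell_2$ bound.

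The main obstacle is Step 2: justifying Hoeffding when the sample size $|A_i|$ is itself random. Two points must be handled carefully. First, selecting samples into $A_i$ must not bias $x_i$. This holds because membership in $A_i$ is a function of $\bx_{-i}$ alone, together with the conditional-Bernoulli identity above. Second, the conditioning has to be done properly. We condition on the full vectors $\bx^{(j)}_{-i}$ for $j\in[N]$, which determines $A_i$. Under the product of truncated laws, the $x^{(j)}_i$ for $j\in A_i$ are then independent $\mathcal B(p_i)$. Once this is established, the rest is routine bookkeeping of constants, absorbed into $C$.
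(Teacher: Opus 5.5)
Your proposal is correct and follows the paper's proof essentially step for step: the same estimator (average of $x_i$ over samples flippable in coordinate $i$, defaulting to $1/2$), a multiplicative Chernoff bound giving $|A_i|\ge\gamma N/2$, conditional Hoeffding, a union bound over the $n$ coordinates, and the substitution $\epsilon\to\epsilon/\sqrt n$ for the $\ell_2$ claim. Your conditioning on all of $\bx^{(j)}_{-i}$ spells out the conditional-independence step that the paper takes from the proof of Theorem~\ref{thm:fat_rel}.
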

We next give a brief overview of the algorithms used to find these estimators and the theoretical analysis underpinning their success, deferring additional details to the appendix.
\subsection{Learning $\mu$ in Relative Error}
In order to learn the underlying natural parameter $\bz$, we utilize a one-dimensional version of the logistic regression framework introduced by \cite{wu2019sparselogisticregressionlearns}. For each coordinate $i$, let
\[
\calJ_i:=\{j\in[N]:(\bx^{(j)})^{\oplus i}\in S\},
\qquad
M_i:=|\calJ_i|,
\qquad
y_i^{(j)}:=2x_i^{(j)}-1.
\]
When $M_i>0$, the estimator minimizes the conditional empirical loss
\[
\hat z_i
\in
\operatorname*{arg\,min}_{w\in[-R,R]}
\widehat{\calL}_i(w),
\qquad
\widehat{\calL}_i(w)
:=
\frac1{M_i}\sum_{j\in\calJ_i}
\log\left(1+\exp(-y_i^{(j)}w)\right).
\]
Conditional on a sample being flippable in coordinate $i$, its signed coordinate $y_i$ has logistic natural parameter $z_i$. Hence the corresponding population loss is minimized at $z_i$.

To demonstrate that the minimizer of the empirical loss is close to the population minimizer, we show two key facts for all $i\in[n]$:
\begin{itemize}
    \item Conditional on flippability, $\min_{w\in[-R,R]}\nabla^2\calL_i(w)\ge\Omega(e^{-R})$; the factor $\gamma$ does not enter this conditional curvature.
    \item A Chernoff bound gives $M_i\ge\gamma N/2$, and conditional on this event Hoeffding's inequality ensures
    $\nabla\widehat\calL_i(z_i-\epsilon)<0<
    \nabla\widehat\calL_i(z_i+\epsilon)$ whenever the displayed endpoints lie in $[-R,R]$.
\end{itemize}
Strong convexity provides a gradient signal of order $e^{-R}\epsilon$ at distance $\epsilon$ from the true parameter. Concentration on the flippable subsample then shows that the empirical minimizer lies within $\epsilon$ of $z_i$. Since the flippable subsample has order $\gamma N$ observations, the resulting dependence is $1/\gamma$, rather than $1/\gamma^2$.

To find $\hat \bz$, the algorithm estimates each coordinate \(z_i\) independently by using only samples flippable in coordinate $i$, minimizing the resulting one-dimensional convex empirical logistic loss via bisection on its derivative, and returning the coordinate-wise minimizers.

\subsection{Learning $\mu$ in Absolute Error}

To learn the probabilities $\bp$ of the underlying model, we \emph{simply} take the sample average of each coordinate over all samples that have that index flippable. On such
samples, both values of the \(i\)-th coordinate are feasible while the other
coordinates are fixed, so the conditional distribution of \(x_i\) is still
Bernoulli with parameter \(p_i\), and the empirical average of \(x_i\) is an unbiased estimator of \(p_i\). Repeating this
procedure independently for each coordinate gives an estimate
\(\hat{\bp}=(\hat p_1,\dots,\hat p_n)\). The fatness assumption
ensures that each coordinate has sufficiently many flippable samples.

\section{Estimation Beyond Fatness \label{sec:influence}}
Our primary idea for learning guarantees that do not rely on the survival mass of the underlying domain is to extend fatness from single-coordinate flips to multi-bit flips. In this sense, the generalized notion of fatness measures internal connectivity with respect to \(S\) under the measure $\mu_{\bz\mid S}$. For estimation, we consider feasible moves along subsets of coordinates and learn linear forms of the natural parameter vector. Specifically, for vectors \(\bw\in\{-1,0,1\}^n\) with sparse support \(I=\operatorname{supp}(\bw)\), we aim to estimate quantities of the form \(\bw^\top\bz\). The condition \(\Inf_I^{\,\mu}<1\) asserts that the truncation set has nontrivial internal variation along the coordinates in \(I\), implying that the conditional distribution of $\bw_I^\top\bx_I$ given $\bx_{-I}$ yields information about the corresponding linear form. If we collect such estimates for a family of directions \(\bw_1,\dots,\bw_m\), and denote \(y_\ell=\bw_\ell^\top\bz\) for \(\ell=1,\dots,m\), then the problem reduces to recovering \(\bz\) from an approximately observed linear system \(\by=W\bz\), where \(W\in\mathbb{R}^{m\times n}\) has rows \(\bw_\ell^\top\). If \(W\) is well conditioned on \(\mathbb{R}^n\), then accurate estimates \(\widehat y_\ell\) of the linear forms yield an accurate estimate of \(\bz\) by solving the inverse problem. For example, when \(m=n\) and \(W\) is invertible, one may take \(\widehat{\bz}=W^{-1}\widehat{\by}\), and the resulting parameter error is controlled by the conditioning of \(W\). With this framework in mind, we now demonstrate structural conditions on the influences $\{\Inf_I^{\,\mu_{\bz\mid S}}\}_{I\subseteq[n]}$ and the underlying set $S$ that provide estimation guarantees.

The first step is to guarantee the existence of local two-point conditional distributions which can be reliably estimated by probing minimal feasible flips. For a nonempty $I\subseteq[n]$, define
\[
B_I
:=
\left\{
\bx^{\oplus I}\in S
\text{ and }
\bx^{\oplus J}\notin S
\text{ for every }\varnothing\neq J\subsetneq I
\right\}.
\]
On $B_I$, the only feasible configurations obtained by flipping a subset of $I$ are $\bx$ and $\bx^{\oplus I}$. Thus, after conditioning on the coordinates outside $I$, these two orientations form a Bernoulli experiment. The following condition ensures that this minimal-flip event has probability at least $\gamma$.

\begin{assumption}[Bounded Influences]
\label{assmpt:minimal}
Let $\calF\subseteq \{I\subseteq[n]:1\le |I|\le k\}$. We assume there exists $\gamma>0$ such that, for every $I\in\calF$,
\[
\Inf_I^{\,\mu_{\bz\mid S}}\le 1-2\gamma,
\qquad
\Inf_J^{\,\mu_{\bz\mid S}}
\ge
1-\frac{\gamma}{2^{|I|}-2}
\quad
\forall \emptyset\neq J\subsetneq I,
\]
where the second condition is vacuous when $|I|=1$.
\end{assumption}

For $I\in\calF$ and $t\in\{\pm1\}^I$, define the joint one-sided mass
\[
m_I(t):=\Pr(B_I,\ \bs_I(\bx)=t).
\]
Choose $t_I$ maximizing $m_I(t)$, and let $s_I$ be the canonical representative of the antipodal pair $\{t_I,-t_I\}$, with $(s_I)_{\min I}=+1$. Extend $s_I$ by zero outside $I$ to obtain $\bw_I\in\{0,\pm1\}^n$. The choice between $s_I$ and $-s_I$ is immaterial below because only $\bw_I\bw_I^\top$ appears.

We next require that these signed directions contain enough geometric information to recover the full parameter vector $\bz$. Without such a condition, the accessible linear forms may fail to identify certain coordinates; for example, the available measurements may reveal only $z_i+z_j$, rather than the two coordinates separately. Even when identifiability holds, recovery may be unstable if the signed design matrix is poorly conditioned.
\begin{assumption}[Flip Directions]
\label{asmp:anti}
There is a parameter $\lambda_s>0$ such that, for every $v\in\R^n$,
\[
\frac{1}{|\calF|}
\sum_{I\in\calF}\langle \bw_I,v\rangle^2
\ge
\lambda_s\|v\|_2^2.
\]
\end{assumption}

The quantity $\lambda_s$ measures the least visible direction of $\bz$ under
the available linear forms, a control over the sample complexity required to solve the inverse problem. 

\begin{theorem}
\label{thm:influence}
Fix $k$, and let $\bx^{(1)},\ldots,\bx^{(N)}$ be independent random variables drawn from $\mu_{\bz\mid S}$, where $\mu_{\bz\mid S}$ satisfies Assumptions~\ref{assmpt:minimal} and~\ref{asmp:anti}, and $\|\bz\|_\infty\le R$. There is a universal constant $C>0$ such that, for every $\epsilon,\delta\in(0,1)$, if
\[
N
\ge
C\frac{2^k e^{2kR}}{\gamma\lambda_s\epsilon^2}
\left(
k\log(2n)+\log\frac{4k}{\delta}
\right),
\]
then a polynomial-time algorithm returns an estimate $\widehat{\bz}$ satisfying
$\|\widehat{\bz}-\bz\|_2\le\epsilon$ with probability at least $1-\delta$.
\end{theorem}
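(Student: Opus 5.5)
The plan is to estimate each linear form $\bw_I^\top\bz$ for $I\in\calF$ by a one-dimensional logistic regression restricted to samples in $B_I$, and then recover $\bz$ by least squares. The first step is to analyze the conditional law on $B_I$. Fix $I\in\calF$ and condition on $\bx_{-I}$ together with the event $B_I$. Since $B_I$ is invariant under $\bx\mapsto\bx^{\oplus I}$ (membership of every partial flip is symmetric under this map), the only feasible configurations given $\bx_{-I}$ and $B_I$ are $\bx$ and $\bx^{\oplus I}$. Their likelihood ratio is $\exp(\bs_I(\bx)^\top\bz_I)$. Hence, conditional on $B_I$ and the unordered pair $\{\bs_I(\bx),-\bs_I(\bx)\}$, the sign $\bs_I(\bx)\in\{t,-t\}$ is a logistic variable with natural parameter $\pm t^\top\bz_I$, where $|t^\top\bz_I|\le kR$. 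This is clean only for the orientation pair $\{s_I,-s_I\}$. The algorithm therefore restricts to samples $j$ with $\bx^{(j)}\in B_I$ (checkable with $2^{|I|}$ oracle queries) and $\bs_I(\bx^{(j)})\in\{\pm s_I\}$. On these samples it sets $y^{(j)}=\langle s_I,\bs_I(\bx^{(j)})\rangle/|I|\in\{\pm1\}$ and runs the bounded logistic regression of Section~\ref{sec:fat} over $[-kR,kR]$. By Theorem~\ref{thm:fat_rel}'s one-dimensional argument (curvature $\Omega(e^{-kR})$, Hoeffding on the derivative at $y_I\pm\eta$), the result is an estimate $\hat y_I$ with $|\hat y_I-\bw_I^\top\bz|\le\eta$ once the restricted subsample has size $\gtrsim e^{2kR}\eta^{-2}\log(|\calF|/\delta)$.

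The second step lower-bounds the usable mass. By a union bound over the $2^{|I|}-2$ proper nonempty $J$, we have $\Pr(B_I)\ge\Pr(\bx^{\oplus I}\in S)-\sum_J\Pr(\bx^{\oplus J}\in S)\ge 2\gamma-\gamma=\gamma$ by Assumption~\ref{assmpt:minimal}. The mass splits among at most $2^{k-1}$ antipodal pairs, so the maximizing pair has mass at least $\gamma/2^{k}$. A Chernoff bound then gives at least $\gamma N/2^{k+1}$ usable samples for each $I$, simultaneously over $|\calF|\le(2n)^k$ sets. This accounts for the $2^k/\gamma$ and the $k\log(2n)$ factors. A subtle point is that $s_I$ is defined from the population masses $m_I$, so the algorithm must choose it empirically. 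The argument remains valid with any empirically heavy pair whose empirical mass is $\ge\gamma/2^{k+1}$: for any fixed pair the conditional logistic model is correct, so we can union bound over all $\le 2^{k}$ pairs of each $I$. This produces the $\log(4k/\delta)$-type term. Since Assumption~\ref{asmp:anti} is stated for the population choice $\bw_I$, I would argue that the empirical maximizer coincides with it, or else weaken the argument by noting that any near-maximal pair works. I expect this to be the main obstacle. My first attempt would be to interpret the signed-design assumption as holding for the pairs the algorithm selects, under ties broken consistently.

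The third step is inversion. Set $\hat\bz=\arg\min_v\sum_{I\in\calF}(\bw_I^\top v-\hat y_I)^2$, which is computable in polynomial time. With $W$ having rows $\bw_I^\top$, Assumption~\ref{asmp:anti} gives $W^\top W\succeq\lambda_s|\calF|I$. Hence $\|\hat\bz-\bz\|_2\le\|W(\hat\bz-\bz)\|_2/\sqrt{\lambda_s|\calF|}\le 2\|\hat\by-W\bz\|_2/\sqrt{\lambda_s|\calF|}\le 2\eta/\sqrt{\lambda_s}$. Choosing $\eta=\epsilon\sqrt{\lambda_s}/2$ and combining with the per-form requirement $\gamma N/2^{k+1}\gtrsim e^{2kR}\eta^{-2}(k\log(2n)+\log(4k/\delta))$ yields the stated bound on $N$. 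The failure probabilities of the Chernoff and Hoeffding events are absorbed by the union bound.
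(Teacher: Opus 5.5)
Your Steps 1 and 3 are sound. The genuine gap is the step you flag yourself: committing to one sign pattern per support. Assumption~\ref{asmp:anti} constrains only the vectors $\bw_I$ built from the maximizer $t_I$ of the \emph{one-sided} masses $m_I(t)$. It says nothing about other signed vectors on the same supports.

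Your selection rule (the heaviest, or any heavy, antipodal pair) uses a different criterion. On $B_I$ one has $m_I(-t)=e^{-\langle t,\bz_I\rangle}m_I(t)$, and the difference can be fatal:
\begin{itemize}
\item Take $n=3$, $S$ the even-parity set, $\calF=\{\{1,2\},\{1,3\},\{2,3\}\}$, and $\bz=(0.1,0.2,0.3)$. Every $2$-flip stays in $S$ and every single flip leaves it, so $B_I=S$ and Assumption~\ref{assmpt:minimal} holds with $\gamma=1/2$.
\item The heaviest point is $011$, so the paper's rows are $e_1-e_2$, $e_1-e_3$, $e_2+e_3$. Assumption~\ref{asmp:anti} then holds with $\lambda_s=1/3$.
\item On $\{i,j\}$, the paired mass of $(+,-)$ equals $\Pr(x_k=1\mid S)$ for the remaining index $k$, and here this exceeds $1/2$ for every $k$. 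So your rule selects $e_1-e_2$, $e_1-e_3$, $e_2-e_3$ even with infinitely many samples.
\item That is a rank-two system which never sees $z_1+z_2+z_3$. It differs from the valid design in a single sign.
\end{itemize}
Trying to match the one-sided argmax empirically does not rescue the argument either. Near ties are not excluded by the hypotheses, so no $N$ independent of the gaps recovers $s_I$. And, as this example shows, one wrong sign can make the design singular. Reinterpreting the assumption as holding for whatever pairs the algorithm selects changes the theorem rather than proving it. Separately, your algorithm presupposes that $\calF$ is given, whereas the paper's learner never needs it.

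The missing idea, which is how the paper proceeds, is to avoid choosing signs at all.
\begin{enumerate}
\item Enumerate every candidate pair $(I,s)$ with $|I|\le k$ and $s_{\min I}=+1$; there are at most $k(2n)^k$ of them.
\item Retain those with empirical paired mass at least $3p_0/4$, where $p_0=\gamma/2^k$, and estimate the linear form of every retained pair. Your logistic argument is valid for each fixed pair, so a union bound over all candidates gives, with high probability:
\begin{itemize}
\item every true pair $(I,s_I)$, $I\in\calF$, is retained, since $p_{I,s_I}\ge m_I(t_I)\ge\gamma/2^{|I|}\ge p_0$;
\item every retained pair has $p_{I,s}>p_0/2$;
\item every retained form is accurate to within $\eta$.
\end{itemize}
\item Solve a semidefinite feasibility problem for weights $q$ on the retained set with $\sum q_{I,s}\bw_{I,s}\bw_{I,s}^{\top}\succeq(\lambda_s/2)I_n$. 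It is feasible because the unknown uniform distribution on $\{(I,s_I):I\in\calF\}$ is a witness.
\item Weighted least squares then gives $\|\widehat\bz-\bz\|_2\le\eta\sqrt{2/\lambda_s}$, and $\eta=\epsilon\sqrt{\lambda_s/2}$ yields the stated $N$.
\end{enumerate}
The weights are what preserve the rate. Unweighted least squares over all retained pairs keeps the Gram lower bound, but the residual grows with the number of retained pairs relative to $|\calF|$, which spoils the stated bound.
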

\emph{Remark:}
The support-estimation step enumerates all nonempty $I\subseteq[n]$ with $|I|\le k$, and therefore for constant $k$ its runtime is $n^{O(k)}$, up to the cost of membership queries and logarithmic factors. The remaining steps are polynomial provided the relevant conditioning parameters are not too small. In particular, if $\gamma\ge 1/\operatorname{poly}(n)$ and $\lambda_s\ge 1/\operatorname{poly}(n)$, then the sample complexity and runtime are polynomial in $n$, $1/\epsilon$, and $\log(1/\delta)$ for constant $k$ and $R=O(1)$. Since each signed row satisfies $\|\bw_I\|_2^2=|I|\le k$, the population signed Gram matrix has trace at most $k$, and hence average eigenvalue at most $k/n$. Thus the assumption $\lambda_s\ge 1/\operatorname{poly}(n)$ is the standard polynomial-conditioning requirement for the inverse problem. We further note that the exponential dependence on $k$ arises from searching over and estimating signed flips of size at most $k$, and we show in the next section that this dependence is endemic through a corresponding lower bound.
\subsection{Proof Sketch}
Before describing the proof, we make precise which signed directions
are learned.  For every nonempty $I\subseteq[n]$ with $|I|\le k$, we
pair each sign pattern $s\in\{\pm1\}^I$ with its antipode $-s$ and keep
one canonical representative.  We write $\bw_{I,s}\in\{0,\pm1\}^n$ for
the extension of $s$ by zero outside $I$.  A sample contributes to the
pair $(I,s)$ when its signed coordinates on $I$ have orientation $s$ or
$-s$, the full $I$-flip remains in $S$, and every proper nonempty
subflip leaves $S$.  We denote this paired event by $E_{I,s}$ and its
probability by $p_{I,s}$.

The bounded-influence assumption guarantees that the minimal-flip event
for every $I\in\calF$ has probability at least $\gamma$.  Pairing
antipodal sign patterns and applying the pigeonhole principle shows that
the selected pair $(I,s_I)$ has mass at least
$\gamma/2^{|I|}\ge\gamma/2^k$.  The flip-direction assumption then says
that these selected signed vectors collectively see every direction in
parameter space.  Equivalently, there is a well-conditioned
distribution over sufficiently frequent candidate pairs.  This
distribution is used only in the analysis; the learner does not need to
know it.

The proof is organized around two main lemmas.
\begin{itemize}
    \item Lemma~\ref{lem:uniform-data-event} shows that the learner
    finds all informative signed directions and estimates their
    corresponding linear forms uniformly well.
    \item Lemma~\ref{lem:designed-recovery} shows that these estimated
    directions form a sufficiently well-conditioned linear system and
    that weighted least squares stably recovers $\bz$.
\end{itemize}

For the first lemma, the learner enumerates the candidate pairs,
estimates the mass $p_{I,s}$ of each paired event, and retains those
whose empirical mass exceeds the threshold used in Appendix~C.  The
threshold has two roles: it keeps every pair from the hidden
well-conditioned family and discards pairs whose true mass is too small
to estimate reliably.  Conditional on a retained paired event, flipping
all coordinates in $I$ exchanges the two orientations $s$ and $-s$.
Their density ratio depends only on the signed linear form
$y_{I,s}:=\langle\bw_{I,s},\bz\rangle$.  In particular, if $A$ records
the observed orientation, then
\[
\Pr(A=a)
=
\frac{e^{a y_{I,s}/2}}{2\cosh(y_{I,s}/2)},
\qquad
a\in\{\pm1\}.
\]
Thus every retained direction gives a one-dimensional logistic
estimation problem.  A Chernoff bound supplies enough observations from
each retained event, and the curvature of this logistic loss gives a
uniform estimate of all retained linear forms.  A union bound over the
at most $k(2n)^k$ candidate pairs produces the first main lemma.

For the second lemma, observe that once all pairs in the hidden
well-conditioned family have been retained, their population weights
give a feasible design over the observed directions.  The learner finds
such weights by a semidefinite feasibility problem and then performs
weighted least squares on the estimated linear forms.  The Gram lower
bound prevents errors in the individual linear forms from being
amplified in poorly observed directions.  The deterministic stability
argument in Appendix~C yields
\[
\|\widehat{\bz}-\bz\|_2
\le
\eta\sqrt{\frac{2}{\lambda_s}},
\]
where $\eta$ is the uniform error in the estimated linear forms.

Finally, taking $\eta=\epsilon\sqrt{\lambda_s/2}$ and using the lower
bound $\gamma/2^k$ on the mass of each informative pair gives the sample
complexity in Theorem~\ref{thm:influence}.  For constant $k$, enumerating
candidate pairs, solving the one-dimensional logistic problems, finding
the design weights, and performing weighted least squares all take
polynomial time.

\section{Lower Bound on the Sample Complexity of Estimation under Truncation \label{sec:lower}}
In this section, we prove that the sample complexity of estimating truncated Boolean product distributions depends exponentially on the \emph{local connectivity} of the space, namely, the average number of neighbors of a sampled point.

\begin{theorem}[Sample Complexity Lower Bound]
\label{thm:lower_bound}
Fix integers $n,k$ such that $2k\mid n$, and let $R>0$ and
$0<\epsilon\le R\sqrt n/16$. There is a truncation set
$S\subseteq\{0,1\}^n$ whose distinct elements have Hamming distance at
least $k$ such that the following holds. Any estimator which, for every
$\bz$ in the constructed family with $\|\bz\|_\infty\le R$, satisfies
$\|\widehat{\bz}-\bz\|_2\le\epsilon$ with probability at least $0.99$
requires
\[
N
\ge
c\frac{n\exp(kR/2)}{k^2\epsilon^2}
\]
samples, where $c>0$ is a universal constant.
\end{theorem}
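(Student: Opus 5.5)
We will prove Theorem~\ref{thm:lower_bound} with a block construction combined with Fano's method (or Assouad's lemma). The coordinates are split into $n/(2k)$ disjoint blocks of size $2k$. Inside each block $b$ we restrict to the two admissible configurations $\bu=(\mathbf 1_k,\mathbf 0_k)$ and $\bv=(\mathbf 0_k,\mathbf 1_k)$, and set $S=\prod_b\{\bu,\bv\}$. Two distinct elements of $S$ differ in at least one block, where they disagree in all $2k$ coordinates. Hence the Hamming distance is at least $2k\ge k$.

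Under truncation the blocks are independent. In block $b$ the configuration $\bu$ has log-weight $\sum_{i\in\text{first half}}z_i$ and $\bv$ has log-weight $\sum_{i\in\text{second half}}z_i$. So the block is a Bernoulli variable with natural parameter $\theta_b=\langle \bw_b,\bz\rangle$, where $\bw_b=(\mathbf 1_k,-\mathbf 1_k)$ on the block.

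The key point is that $\mu_{\bz\mid S}$ depends on $\bz$ only through $(\theta_b)_b$. This alone makes $\bz$ non-identifiable, so the hard family must be chosen so that distinguishing its members requires estimating $\theta_b$ in a regime of exponentially small information. Concretely, we fix a base point $\bz^0$ with $z^0_i=R/2$ on first halves and $-R/2$ on second halves. This gives $\theta_b=kR$, so the block sits on $\bu$ except with probability about $e^{-kR}$. For $\tau\in\{\pm1\}^{n/(2k)}$ we then perturb $\bz^\tau=\bz^0+\rho\sum_b\tau_b\bw_b/\sqrt{2k}$. The perturbation moves each $\theta_b$ by $\pm\rho\sqrt{2k}$ while keeping $\|\bz^\tau\|_\infty\le R$ for $\rho\lesssim R\sqrt{k}$. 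Since the alternative configurations differ only along the directions $\bw_b$, the estimator must recover the projection of $\bz$ onto $\mathrm{span}\{\bw_b\}$. Restricting to that span, two parameters $\tau,\tau'$ differing in one block are at $\ell_2$ distance $2\rho$.

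The information bound comes next. For a Bernoulli variable with natural parameter $\theta$ near $kR$, the KL divergence between parameters $\theta$ and $\theta+\Delta$ is at most $\frac{\Delta^2}{2}\sup\sigma'(\cdot)\lesssim \Delta^2 e^{-kR}$, and up to $\Delta^2e^{-kR/2}$ if we keep $\Delta\lesssim kR/2$ to absorb the shift of the sigmoid. With $\Delta=2\rho\sqrt{2k}$, the per-sample KL between neighboring hypotheses is $\lesssim k\rho^2e^{-kR/2}$. Assouad's lemma over the $n/(2k)$ blocks then shows that any estimator has expected squared error $\gtrsim \frac{n}{k}\rho^2$ unless $Nk\rho^2e^{-kR/2}\gtrsim 1$. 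We choose $\rho^2\asymp k\epsilon^2/n$, so that the total separation is of order $\epsilon^2$ and success with probability $0.99$ is forced. Doing so yields $N\gtrsim n e^{kR/2}/(k^2\epsilon^2)$. The condition $\epsilon\le R\sqrt n/16$ guarantees that $\rho$ is small enough to keep $\|\bz\|_\infty\le R$ and $\Delta\le kR/2$.

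The main obstacle is the bookkeeping that makes the exponent come out as $kR/2$ uniformly over the hypercube of perturbations. Specifically, we must bound $\sigma'(\theta)\le e^{-kR/2}$ along every segment between hypotheses, which requires $\theta_b\ge kR/2$ throughout. We must also track the $\sqrt{2k}$ normalization so that exactly the factor $k^2$ appears in the denominator. A secondary subtlety is converting the Assouad expected-error bound into the $0.99$-probability statement, which we handle via Markov's inequality together with the boundedness of $\|\widehat\bz-\bz\|$ after projecting onto the box.
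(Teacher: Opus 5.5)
Your proposal follows the paper's strategy. You split the coordinates into $n/(2k)$ blocks of size $2k$ with Hamming-separated admissible states. Each block gets a pair of hypotheses such that every alternative state has conditional mass at most $e^{-kR/2}$ along the whole segment between them. The per-block KL is bounded by order $k\cdot(\text{separation})^2e^{-kR/2}$ via exponential-family curvature, and Assouad is applied over the blocks with separation scaled so the total error is $\Theta(\epsilon^2)$.

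The only real difference is the gadget. The paper uses $V=\{\mathbf 0^{2k},\mathbf 1^{2k},(\mathbf 0^k,\mathbf 1^k),(\mathbf 1^k,\mathbf 0^k)\}$, perturbs along $-\mathbf 1$, and sums over three rare alternatives. Your two-point blocks make each block a single Bernoulli in $\theta_b=\langle\bw_b,\bz\rangle$, so the curvature bound is a one-line $\sigma'$ estimate. Both give the same $k^2$ and $e^{kR/2}$. Your constraints also check out: with $\rho^2=c\,k\epsilon^2/n$ and any $c\le 32$, the hypothesis $\epsilon\le R\sqrt n/16$ gives $\|\bz^\tau\|_\infty\le 3R/4$ and $\theta_b\ge kR/2$ on every segment.

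The step to tighten is the conversion to the $0.99$ guarantee. Markov plus boundedness works only if you project onto the convex hull of your hypotheses, $\bz^0+\{\sum_b t_b\rho\bw_b/\sqrt{2k}:t\in[-1,1]^B\}$, whose squared diameter is $4B\rho^2=2n\rho^2/k=\Theta(\epsilon^2)$. If ``the box'' means $[-R,R]^n$, the failure term is $0.01\cdot 4nR^2\ge 10\epsilon^2$. That term is not dominated by the $\Theta(\epsilon^2)$ Assouad bound, and it exceeds it by an unbounded factor when $\epsilon\ll R\sqrt n$, so that version of the argument fails.

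With the hull projection you need $\frac{3n\rho^2}{16k}$ (Assouad with mixture TV at most $1/4$) to exceed $\epsilon^2+0.01\cdot 2n\rho^2/k$. This means $c>6$; for example, $c=8$ gives $1.5\epsilon^2$ against $1.16\epsilon^2$.

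Alternatively, use the paper's route, which needs no projection:
\begin{itemize}
\item decode $\widehat\tau_b$ blockwise by the nearer hypothesis, so that $\mathbb E\,d_H(\widehat\tau,\tau)\ge 3B/8$;
\item since $d_H\le B$, conclude $\Pr(d_H\ge B/8)\ge 2/7$;
\item each misdecoded block costs at least $\rho^2$ in squared error.
\end{itemize}
In your parametrization this route needs $c>16$ (for example $c=20$), which is still within $c\le 32$.
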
 
\emph{Remark:} The lower bound demonstrates the intrinsic hardness of this construction is
governed by the product \(kR\): the block separation parameter \(k\) controls
how many coordinates must change to move between admissible states, while the
parameter bound \(R\) controls how strongly the distribution can concentrate on
any given state. Together, they produce an exponentially small
probability of observing informative alternative block states, of order
\(\exp(-\Theta(kR))\), and hence an \(\exp(\Theta(kR))\) contribution to the
sample complexity. In particular, if \(kR\in \Theta(\log n)\), then this
exponential factor is only polynomial in \(n\), so the lower bound remains
polynomial.

\subsection{Construction of the Adversarial $S$}
We construct the truncation set \(S\) by first defining a small, highly separated
set of admissible configurations inside each block. Assume for simplicity that
\(2k\mid n\), and partition the coordinates \([n]\) into \(B=n/(2k)\) disjoint
blocks \(B_1,\dots,B_B\), each of size \(2k\). Within a single block, let
\[
V
:=
\left\{
\mathbf0^{2k},
\mathbf1^{2k},
(\mathbf0^k,\mathbf1^k),
(\mathbf1^k,\mathbf0^k)
\right\}
\subseteq\{0,1\}^{2k}.
\]
Every two distinct elements of $V$ have Hamming distance either $k$ or
$2k$. The full truncation set is the Cartesian product
$S:=V^B$.
Equivalently, a vector \(x\in\{0,1\}^n\) lies in \(S\) if and only if, after
decomposing it into blocks $x=(x_{B_1},\dots,x_{B_B})$, every block belongs
to $V$. This product structure makes the conditional distribution on $S$
factor into the corresponding blockwise conditional laws on \(V\).
Moreover the Hamming separation inside \(V\) implies that no
single-bit flip, and more generally no small Hamming perturbation of radius
less than \(k\), can move between distinct admissible states in the same
block. 

\section*{Conclusion and Future Work}
In this paper, we present a novel framework for parameter estimation in truncated Boolean product distributions beyond assumptions on the survival mass of the domain or stringent local-connectivity assumptions. We give algorithms with improved sample complexities for learning under \emph{fatness} and extend this notion using influence to provide efficient guarantees beyond bounded width.

The present work opens the door to important future questions:
(i) What is the precise relationship between the influence and the Fisher information matrix used in the anti-concentration condition of \cite{fotakis2022efficientparameterestimationtruncated}?
(ii) As our framework does not require sampling from arbitrary parameterizations of the underlying model, do our techniques transfer to more complex discrete distributions where sampling is known to be difficult?
(iii) Lastly, is it possible to learn truncated Boolean product distributions when the truncation set is unknown but lies in a class with sufficiently low influences?

\newpage
\bibliographystyle{alpha}
\bibliography{bib/hard_constrained, bib/new-bibs, bib/truncated}
\newpage 
\appendix
\section{Further Analysis of \cite{fotakis2022efficientparameterestimationtruncated}\label{app:fotakis}}
In this section, we discuss the assumptions of \cite{fotakis2022efficientparameterestimationtruncated} in greater detail and explain how they relate to the width of the Boolean product distribution. To review, the authors make the following anti-concentration assumption on this measure.
\begin{assumption}[Assumptions 3 and 4 of \cite{fotakis2022efficientparameterestimationtruncated}]
\label{asmp:fotakis}
Assume the following is true,
    \begin{itemize}
    \item The mass of the truncation set $\mu_{\bz}(S) \ge \alpha \in \Theta(1)$.
        \item There exists a $\lambda \in \Theta(1)$ such that for all unit vectors $\bw \in \R^n$, $\|\bw\|_2 = 1$, and all $c \in \R$, $\pr_{\bx \sim \mu_{\bz\mid S}}[\bw^\top \bx \not \in (c-\lambda, c + \lambda) ] \ge \lambda$.
    \end{itemize}
\end{assumption}
This statement can be translated directly into a lower bound on the variance of linear forms under this measure.
\begin{lemma}
    Let $\mu_{\bz\mid S}$ satisfy Assumption~\ref{asmp:fotakis} with parameters $\alpha,\lambda$. It follows that for all $\bw\in\R^n$ with $\|\bw\|_2=1$, $\Var(\bw^\top \bx) \ge \lambda^3$. Moreover, for all $i \in [n]$, $\Var(\bx_i) \ge \lambda^3$.
\end{lemma}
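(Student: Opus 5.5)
Fix a unit vector $\bw\in\R^n$ and write $Y:=\bw^\top\bx$ with $\bx\sim\mu_{\bz\mid S}$. The plan is to turn the anti-concentration condition of Assumption~\ref{asmp:fotakis} into a variance bound by a Chebyshev-type (Paley--Zygmund-free) argument. The natural choice of center is the mean, $c:=\ex[Y]$, which is finite because $Y$ is bounded on the hypercube. Applying the second bullet of Assumption~\ref{asmp:fotakis} with this $c$ gives
\[
\pr\left[|Y-\ex Y|\ge\lambda\right]\ge\lambda .
\]
Here we use that $Y\notin(c-\lambda,c+\lambda)$ is exactly the event $|Y-c|\ge\lambda$.

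Next, lower bound the variance by restricting the expectation to this event:
\[
\Var(Y)=\ex\left[(Y-\ex Y)^2\right]\ge \ex\left[(Y-\ex Y)^2\,\mathbf 1\{|Y-\ex Y|\ge\lambda\}\right]\ge \lambda^2\cdot\pr\left[|Y-\ex Y|\ge\lambda\right]\ge\lambda^3 .
\]
This is simply the reverse direction of Markov's inequality applied to $(Y-\ex Y)^2$, and it proves the first claim. Note that the survival-mass parameter $\alpha$ plays no role here.

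For the coordinate statement, specialize to $\bw=\mathbf e_i$, the $i$-th standard basis vector. It is a unit vector and satisfies $\bw^\top\bx=x_i$, so $\Var(x_i)\ge\lambda^3$ follows immediately.

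There is no substantive obstacle. The only point needing care is that the assumption must be invoked with a center $c$ that depends on $\bw$, namely the mean of $Y$. This is allowed because the assumption quantifies over all $c\in\R$ for each fixed $\bw$.
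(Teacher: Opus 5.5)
Your proof is correct and matches the paper's argument. The paper also centers the anti-concentration assumption at $c=\mathbb{E}[\bw^\top\bx]$ and applies Chebyshev's inequality, $\lambda\le\Pr[|\bw^\top\bx-\mathbb{E}\bw^\top\bx|\ge\lambda]\le\Var(\bw^\top\bx)/\lambda^2$, which is the truncated second-moment bound you wrote out; it then specializes to $\bw=\mathbf e_i$ in the same way.
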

\begin{proof}
  Fix $\bw\in\R^n$ with $\|\bw\|_2=1$, and set $c=\ex_{\bx\sim\mu_{\bz\mid S}}[\bw^\top\bx]$. The assumption, combined with Chebyshev's inequality, implies that
  \begin{equation*}
      \lambda \le \pr_{\bx \sim \mu_{\bz\mid S}}[|\bw^\top \bx - \ex_{\bx\sim \mu_{\bz\mid S}} [\bw^\top \bx]| \ge \lambda]
       \le \frac{\Var(\bw^\top \bx)}{\lambda^2}
  \end{equation*} Rearranging yields the desired result. 
  Next, taking $\bw=\be_i$, the $i$-th standard basis vector in $\R^n$, gives $\Var(\bx_i)\ge\lambda^3$ for every $i\in[n]$.
\end{proof}

Recall further the relationship between the natural parameters $\bz$ and the underlying probabilities of the model $\bp$. As $z_i = \log(p_i/(1-p_i))$ it follows that if $\bp$ is a constant with respect to $n$, so is $\bz$, and if $\bp \ge 1/\poly(n)$ then $\|\bz\|_\infty \in \calO(\log(n))$.

We also recall their associated lower bound. 
\begin{lemma}[Lemma 4 \cite{fotakis2022efficientparameterestimationtruncated}]
    Let $\mu_{\bp}$ be a Boolean product distribution and let $\mu_{\bp\mid S}$ be a truncation of $\mu_{\bp}$. Assume that
the anti-concentration parameter $\lambda^*$ satisfies $1/\lambda^* = \omega(\poly(n))$. Then, computing an estimation $\hat \bp$
of the parameter vector $\bp$ of $\mu_{\bp}$ such that $\|\bp - \hat\bp\|_2 \le o(1)$ requires an expected number of $\Omega(1/\lambda^*)$
samples from $\mu_{\bp\mid S}$.
\end{lemma}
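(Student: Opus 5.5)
The plan is a two-point (Le Cam) argument. I will build one truncation set $S$ and two parameter vectors $\bp,\bp'$ that are far apart in $\ell_2$. Under both of them, the truncated measures put all but a $\lambda^*$-fraction of their mass on a single point. The truncated laws are then within total variation $O(\lambda^*)$. Distinguishing them, which any $o(1)$-accurate estimator must do, therefore needs $\Omega(1/\lambda^*)$ samples.

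\textbf{Construction.} I would take
\[
S=\{\mathbf 0^n\}\cup\{\bx:\textstyle\sum_i x_i\ge n/2\},
\]
whose affine span is $\R^n$, so identifiability holds. Fix a small constant $c$ and let $\bp=c\mathbf 1$, $\bp'=2c\mathbf 1$. Then $\|\bp-\bp'\|_2=c\sqrt n$, which is certainly not $o(1)$.

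Under either parameter, a Chernoff bound gives $\mu(\{\sum_i x_i\ge n/2\})\le e^{-\Theta(n)}$, while $\mu(\mathbf 0^n)\ge (1-2c)^n$. Choosing $c$ small enough that the ratio of these is $\beta:=e^{-\Theta(n)}$, both truncated measures put mass at least $1-\beta$ on $\mathbf 0^n$. (To realize an arbitrary superpolynomial $1/\lambda^*$ rather than $e^{\Theta(n)}$, replace the threshold $n/2$ by one tuned to produce the required $\beta$.)

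\textbf{Anti-concentration parameter.} Take any unit $\bw$ and the center $c=0$. Then
\[
\Pr[\bw^\top\bx\notin(-\lambda,\lambda)]\le\Pr[\bx\neq\mathbf 0]\le\beta ,
\]
so Assumption~\ref{asmp:fotakis} can hold only with $\lambda\le\beta$. Hence the optimal parameter satisfies $\lambda^*\le\beta$ for both models. Conversely, I would check that $\lambda^*\ge \beta/\poly(n)$, by evaluating the heavy part of $S$ along any direction, so that $\beta$ and $\lambda^*$ agree up to the polynomial factors allowed in the statement. I expect this two-sided calibration of $\lambda^*$ against $\beta$ to be the main technical obstacle. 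If it proves delicate, I would weaken the goal and only show the bound $\Omega(1/\beta)$ with $\lambda^*\le\beta$.

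\textbf{Indistinguishability.} Both truncated laws agree on the atom $\mathbf 0^n$ up to $\beta$, so
\[
D_{\mathrm{TV}}(\mu_{\bp\mid S},\mu_{\bp'\mid S})\le\beta .
\]
By subadditivity of TV over product measures, $N$ i.i.d.\ samples have joint TV at most $N\beta$. More directly: with probability $(1-\beta)^N$ all samples equal $\mathbf 0^n$ under either hypothesis, and then the estimator's output has the same law in both cases.

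An estimator with $\|\hat\bp-\bp\|_2=o(1)$ and constant success probability separates $\bp$ from $\bp'$, since their distance is $\Omega(\sqrt n)$. It must therefore see a non-zero sample with constant probability under at least one hypothesis. For a fixed sample size this forces $N\ge\Omega(1/\beta)$.

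\textbf{Expected sample count.} For adaptively many samples, the number of draws before the first non-zero sample is geometric with mean $1/\beta$. An estimator that stops before that point has the same output law under both hypotheses. A Markov-inequality argument on the stopping time then gives expected sample count $\Omega(1/\beta)=\Omega(1/\lambda^*)$.
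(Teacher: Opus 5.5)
The paper does not prove this lemma; it quotes it from Fotakis et al. So I assess your argument on its own. Your Le Cam argument and the waiting-time argument for the expected sample count are fine. The gap is in converting $\Omega(1/\beta)$ into $\Omega(1/\lambda^*)$.

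\textbf{The inequality you prove points the wrong way.} You establish $\lambda^*\le\beta$. Combined with $N=\Omega(1/\beta)$, this gives a bound that is \emph{weaker} than $1/\lambda^*$. Your fallback (``only show $\Omega(1/\beta)$ with $\lambda^*\le\beta$'') therefore does not prove the lemma. What is needed is $\lambda^*\ge\beta/C$ for an absolute constant $C$. A $\mathrm{poly}(n)$ loss is not ``allowed by the statement'', which asserts $\Omega(1/\lambda^*)$.

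\textbf{For your pair the needed converse is false.} Take $\mathbf p=c\mathbf 1$, $\mathbf p'=2c\mathbf 1$, and set $r=c/(1-c)$, $r'=2c/(1-2c)>2r$. The escape masses are $\beta(\mathbf p)\approx\binom{n}{n/2}r^{n/2}$ and $\beta(\mathbf p')\approx\binom{n}{n/2}(r')^{n/2}$, so $\beta(\mathbf p')\gtrsim 2^{n/2}\beta(\mathbf p)$. Your common $\beta$ is essentially $\beta(\mathbf p')$. Your own argument, applied to $\mathbf p$, gives $\lambda^*(\mathbf p)\le\beta(\mathbf p)\lesssim 2^{-n/2}\beta$. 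In fact the two hypotheses can be told apart with $O(1/\beta(\mathbf p'))$ samples: declare $\mathbf p'$ iff some sample is nonzero. So no argument built on this pair can certify $\Omega(1/\lambda^*(\mathbf p))$. The two instances do not share an anti-concentration level to which ``$\Omega(1/\lambda^*)$'' could refer.

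\textbf{Repair.} Use hypotheses with equal (or constant-ratio) escape mass, then prove $\lambda^*=\Theta(\beta)$.

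Your $S$ is permutation invariant, so you can take $\mathbf p=(c\mathbf 1_{n/2},2c\mathbf 1_{n/2})$ and let $\mathbf p'$ be its reversal. Then $\beta$ and $\lambda^*$ coincide for the two models, the total variation between them is at most $\beta$, and $\|\mathbf p-\mathbf p'\|_2=c\sqrt n$. The bound $\lambda^*\ge\beta/C$ follows by a pairing argument, with window center $t$:
\begin{itemize}
\item If $|t|\ge\lambda$, the atom $\mathbf 0$ lies outside the window.
\item If $|t|<\lambda$ and $|w_i-w_j|\ge 2\lambda$ for some $i,j$, pair each heavy $\mathbf x$ with $x_i=1$, $x_j=0$ with its $(i,j)$-transposition. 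The transposed point is again heavy, and the likelihood ratio is bounded by a constant. At most one point of each pair lies in the window, so a constant fraction of the heavy mass lies outside it.
\item Otherwise $\mathbf w$ is entrywise within $2\lambda$ of a multiple of $\mathbf 1$, and $|\mathbf w^\top\mathbf x|=\Omega(\sqrt n)$ on the heavy part.
\end{itemize}

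A simpler instance is $S=\{\mathbf x:x_1=0\}\cup\{\mathbf e_1\}$, with $p_j=1/2$ for $j\ge 2$ and $p_1\in\{1/4,3/4\}$.
\begin{itemize}
\item Both models put mass $\Theta(2^{-n})$ on $\mathbf e_1$.
\item $\lambda^*=\Theta(2^{-n})$ for both. The direction $\mathbf e_1$ gives the upper bound. For the lower bound, use single-coordinate pairing in the uniform bulk, or the atom $\mathbf e_1$ when $\mathbf w\approx\pm\mathbf e_1$.
\item The total variation is $O(2^{-n})$, and $\|\mathbf p-\mathbf p'\|_2=1/2$.
\end{itemize}

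With either choice, the rest of your argument goes through and yields $\Omega(1/\lambda^*)$.
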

We note that, given their approach, which uses projected gradient descent, it is infeasible to estimate beyond the barrier of $\lambda \in \Theta(1)$, as the strong convexity of the maximum likelihood objective in the ball around the parameters decays at a rate of $\alpha^{\poly(1/\lambda)}$.

\subsection{Universality of Influence}
We now give a short proof of the fact that, given any set whose affine span is $\R^n$, the influence sequence $\{\Inf_I^{\,\mu_{\bz\mid S}}\}_{I \subseteq [n]}$ is nonempty, and the weighted matrix found in Assumption~\ref{asmp:anti} is positive definite.
\begin{lemma}
\label{lem:affine-span-nonzero-influence-pd}
Let $S\subseteq\{0,1\}^n$ satisfy $\operatorname{aff}(S)=\mathbb R^n$, and assume $\mu_{\bz\mid S}(\bx)>0$ for every $\bx\in S$. For each $I\subseteq[n]$, define
$\alpha_I:=\Pr_{\bx\sim\mu_{\bz\mid S}}(\bx^{\oplus I}\in S)$ and
$\Inf_I^{\mu_{\bz\mid S}}:=1-\alpha_I$. Then there exist nonempty sets
$I_1,\dots,I_n\subseteq[n]$ such that $\alpha_{I_r}>0$, equivalently
$\Inf_{I_r}^{\mu_{\bz\mid S}}<1$, and signed vectors $w_r\in\{0,\pm1\}^n$ supported on $I_r$ such that $w_1,\dots,w_n$ are linearly independent. Consequently, the signed influence matrix
\[
M_S:=\sum_{\bx,\by\in S}\mu_{\bz\mid S}(\bx)\mu_{\bz\mid S}(\by)(\by-\bx)(\by-\bx)^\top
\]
is positive definite.
\end{lemma}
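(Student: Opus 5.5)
Fix a base point $\bx_0\in S$. Since $\operatorname{aff}(S)=\mathbb R^n$, the difference vectors $\{\by-\bx_0:\by\in S\}$ span $\mathbb R^n$, so we can extract $\by_1,\dots,\by_n\in S$ with $\by_1-\bx_0,\dots,\by_n-\bx_0$ linearly independent. For each $r$ we set $I_r:=\{j:(\by_r)_j\neq(\bx_0)_j\}$. This set is nonempty because $\by_r\neq\bx_0$, and it satisfies $\by_r=\bx_0^{\oplus I_r}$. We then take $w_r:=\by_r-\bx_0\in\{0,\pm1\}^n$; its support is exactly $I_r$, with entry $+1$ where $\bx_0$ has a $0$ and $-1$ where it has a $1$. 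Linear independence of $w_1,\dots,w_n$ holds by construction.

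Positivity of $\alpha_{I_r}$ follows directly from the definition:
\[
\alpha_{I_r}=\Pr_{\bx\sim\mu_{\bz\mid S}}\bigl(\bx^{\oplus I_r}\in S\bigr)\ge\mu_{\bz\mid S}(\bx_0)>0,
\]
because $\bx_0\in S$, $\bx_0^{\oplus I_r}=\by_r\in S$, and every point of $S$ has positive mass by hypothesis. Hence $\Inf_{I_r}^{\mu_{\bz\mid S}}<1$.

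For the matrix, take any $v\in\mathbb R^n$ and expand
\[
v^\top M_S v=\sum_{\bx,\by\in S}\mu_{\bz\mid S}(\bx)\mu_{\bz\mid S}(\by)\langle \by-\bx,v\rangle^2 .
\]
Every term is nonnegative, so we lower bound the sum by keeping only the pairs $(\bx_0,\by_r)$:
\[
v^\top M_S v\ge\mu_{\bz\mid S}(\bx_0)\min_r\mu_{\bz\mid S}(\by_r)\sum_{r=1}^n\langle w_r,v\rangle^2 .
\]
Let $W$ be the matrix with rows $w_r^\top$. Then $\sum_r\langle w_r,v\rangle^2=\|Wv\|_2^2$, and this is strictly positive for $v\neq0$ because $W$ is invertible. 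Hence $M_S\succ0$.

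The only step that needs any care is the extraction of the independent differences. This is the standard fact that the affine hull of $S$ equals $\bx_0+\operatorname{span}\{\by-\bx_0:\by\in S\}$ for any $\bx_0\in S$, so full affine span forces these differences to span $\mathbb R^n$. Everything else is immediate from the definitions.
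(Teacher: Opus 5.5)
Your proof is correct and follows the paper's argument. You use the same extraction of $n$ independent differences from a base point, the same choice of $I_r$ and $w_r=\by_r-\bx_0$, and the same bound $\alpha_{I_r}\ge\mu_{\bz\mid S}(\bx_0)>0$. The only difference is in the last step. You keep just the pairs $(\bx_0,\by_r)$ to get the explicit bound $v^\top M_S v\ge \mu_{\bz\mid S}(\bx_0)\min_r\mu_{\bz\mid S}(\by_r)\|Wv\|_2^2$. The paper instead argues that a null vector of $M_S$ would be orthogonal to every difference $\by-\bx$, and these differences span $\mathbb R^n$.
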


\begin{proof}
Since $\operatorname{aff}(S)=\mathbb R^n$, there exist points
$\bx^{(0)},\bx^{(1)},\dots,\bx^{(n)}\in S$ such that
$w_r:=\bx^{(r)}-\bx^{(0)}$, $r=1,\dots,n$, are linearly independent. For each
$r$, let $I_r:=\{i:x_i^{(r)}\neq x_i^{(0)}\}$. Then $I_r\neq\emptyset$ and
$\bx^{(r)}=(\bx^{(0)})^{\oplus I_r}$. Since $\mu_{\bz\mid S}(\bx^{(0)})>0$, we have
$\alpha_{I_r}\ge \mu_{\bz\mid S}(\bx^{(0)})>0$, and therefore
$\Inf_{I_r}^{\mu_{\bz\mid S}}<1$. Moreover, each $w_r=\bx^{(r)}-\bx^{(0)}$ is
supported on $I_r$, belongs to $\{0,\pm1\}^n$, and the vectors
$w_1,\dots,w_n$ are linearly independent by construction.

It remains to show that $M_S\succ0$. For any $v\in\mathbb R^n$,
$v^\top M_Sv=\sum_{\bx,by\in S}\mu_{\bz\mid S}(\bx)\mu_{\bz\mid S}(\by)
\langle v,\by-\bx\rangle^2\ge0$. If $v^\top M_Sv=0$, then
$\langle v,\by-\bx\rangle=0$ for every $\bx,by\in S$, since all weights are positive.
Thus $v$ is orthogonal to $\operatorname{span}\{\by-\bx:\bx,\by\in S\}$. Because
$\operatorname{aff}(S)=\mathbb R^n$, this span is all of $\mathbb R^n$, so
$v=0$. Hence $M_S$ is positive definite.
\end{proof}

\section{Omitted Proofs of Section \ref{sec:fat}}
\subsection{Proof of Theorem \ref{thm:fat_rel}}
In this section, we prove Theorem~\ref{thm:fat_rel}, which provides parameter-estimation guarantees for learning under the fatness condition and is restated below for reference.

\begin{theoremfatrel}
    Let $\bx^{(1)},\ldots,\bx^{(N)}$ be independent random variables drawn from a $\gamma$-fat distribution $\mu_{\bz\mid S}$, where $\|\bz\|_\infty\le R$. For every $\epsilon,\delta\in(0,1)$, if
    \[
    N\ge
    C\frac{n e^{2R}}{\gamma\epsilon^2}
    \log\left(\frac{2n}{\delta}\right),
    \]
    then there is an algorithm that returns an estimate $\hat{\bz}$ satisfying
    $\|\bz-\hat{\bz}\|_2\le\epsilon$ with probability at least $1-\delta$.
\end{theoremfatrel}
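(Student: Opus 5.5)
The plan is to analyze the coordinatewise estimator of Section~3.1. We prove $|\hat z_i - z_i|\le \epsilon/\sqrt n$ for every $i$ simultaneously with probability at least $1-\delta$; then $\|\hat\bz-\bz\|_2\le\epsilon$. This reduction is exactly why the sample bound carries the factor $n$: it is the $\ell_\infty$ target $\epsilon' = \epsilon/\sqrt n$ substituted into a per-coordinate bound of order $e^{2R}\log(2n/\delta)/(\gamma\epsilon'^2)$.

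\textbf{Step 1: the conditional law on flippable samples.} Fix $i$. The event $\bx^{\oplus i}\in S$ depends only on $\bx_{-i}$, since $\bx\in S$ and $\bx^{\oplus i}\in S$ means both $(\bx_{-i},0)$ and $(\bx_{-i},1)$ lie in $S$. Hence, conditional on $\bx_{-i}$ and on flippability, $\mu_{\bz\mid S}$ puts mass proportional to $e^{z_i x_i}$ on $x_i\in\{0,1\}$. So $y_i=2x_i-1$ satisfies $\Pr(y_i=1)=\sigma(z_i)$, independently of $\bx_{-i}$.

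Condition on the index set $\calJ_i$. The $y_i^{(j)}$ for $j\in\calJ_i$ are then i.i.d.\ with this law, because the samples are independent and membership of $j$ in $\calJ_i$ is a function of $\bx^{(j)}_{-i}$ only. The population loss $\calL_i(w)=\ex\log(1+e^{-y_iw})$ then has derivative $\calL_i'(w)=\sigma(w)-\sigma(z_i)$. This vanishes at $z_i$, and $\calL_i''(w)=\sigma(w)(1-\sigma(w))\ge e^{-R}/4$ on $[-R,R]$.

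\textbf{Step 2: concentration.} By fatness, $M_i\sim\mathrm{Bin}(N,\ge\gamma)$, so a multiplicative Chernoff bound gives $M_i\ge\gamma N/2$ except with probability $e^{-\gamma N/8}\le\delta/(2n)$, which holds under our $N$.

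Given $M_i$, the empirical derivative $\widehat\calL_i'(w)=\frac1{M_i}\sum_j(-y_i^{(j)})\sigma(-y_i^{(j)}w)$ is an average of i.i.d.\ terms bounded in $[-1,1]$ with mean $\calL_i'(w)$. Set $\epsilon'=\epsilon/\sqrt n$ and evaluate at the two points $w_\pm=z_i\pm\epsilon'$, clipped to $[-R,R]$. If a clipped point is an endpoint, the condition is only needed on the interior side; convexity on $[-R,R]$ handles the boundary.

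By the mean value theorem, $|\calL_i'(w_\pm)|\ge \epsilon' e^{-R}/4$. The corrected curvature bound is $\sigma(w)(1-\sigma(w))\ge e^{-|w|}/4$, and for $|w|\le R+1$ this is still $\gtrsim e^{-R}$. Hoeffding then gives correct signs at both points except with probability $4\exp(-M_i\epsilon'^2e^{-2R}/32)$. With $M_i\ge\gamma N/2$ this is at most $\delta/(2n)$ once $N\ge C n e^{2R}\log(2n/\delta)/(\gamma\epsilon^2)$.

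\textbf{Step 3: locating the minimizer and the union bound.} Since $\widehat\calL_i$ is convex, $\widehat\calL_i'(w_-)<0<\widehat\calL_i'(w_+)$ forces every minimizer over $[-R,R]$ into $[w_-,w_+]$. The case $w_\pm$ clipped to $\pm R$ is handled by convexity directly. A union bound over the $2n$ failure events gives total failure probability at most $\delta$. The algorithm runs in polynomial time: membership queries determine $\calJ_i$, and bisection on the monotone derivative finds the minimizer.

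I expect the main obstacle to be the conditioning subtlety in Step 1. One must argue carefully that, after conditioning on which samples are flippable (a random set), the retained $y_i$'s are still i.i.d.\ Bernoulli$(\sigma(z_i))$, so that Hoeffding applies with $M_i$ treated as fixed. This is where the fact that flippability is $\bx_{-i}$-measurable is essential. The boundary-clipping bookkeeping is routine by comparison.
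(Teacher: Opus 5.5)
Your proposal is correct and follows the paper's argument essentially step for step. The paper packages your Steps 1--3 as a one-dimensional lemma: a Chernoff bound gives $M\ge\gamma N/2$, Hoeffding is applied to the empirical derivative at $z\pm\epsilon$ using the curvature bound $\kappa_R\ge e^{-R}/4$, convexity traps the minimizer, and an endpoint outside $[-R,R]$ makes the corresponding tail event impossible. It then applies this lemma per coordinate with accuracy $\epsilon/\sqrt n$ and failure probability $\delta/n$ and takes a union bound over coordinates. Your detour through the $|w|\le R+1$ curvature bound is unnecessary given the constrained-estimator argument, but it is harmless.
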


We first isolate the one-dimensional estimation problem on the selected
subsample.
\begin{lemma}
\label{lem:fat_rel}
Let $(F_j,X_j)_{j=1}^N$ be independent and identically distributed, where
$F_j\in\{0,1\}$ is observed,
$\Pr(F_j=1)\ge\gamma$, and
\[
X_j\mid\{F_j=1\}\sim\operatorname{Bernoulli}(\sigma(z))
\qquad\text{for some }|z|\le R.
\]
Let $M:=\sum_{j=1}^NF_j$, set $Y_j:=2X_j-1$, and, when $M>0$, define
\[
\widehat z
\in
\operatorname*{arg\,min}_{w\in[-R,R]}
\frac1M\sum_{j:F_j=1}
\log(1+e^{-Y_jw}).
\]
If $M=0$, set $\widehat z:=0$.
There is a universal constant $C>0$ such that, for every
$\epsilon,\delta\in(0,1)$,
\[
N
\ge
C\frac{e^{2R}}{\gamma\epsilon^2}
\log\left(\frac{4}{\delta}\right)
\]
implies $|\widehat z-z|\le\epsilon$ with probability at least $1-\delta$.
\end{lemma}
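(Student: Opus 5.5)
The plan has two steps. First, a Chernoff bound guarantees that many selected samples are observed. Second, conditioning on which samples are selected, a monotonicity-plus-Hoeffding argument on the derivative of the one-dimensional empirical logistic loss, exactly as sketched in Section~\ref{sec:fat}, localizes the minimizer.

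\emph{Step 1 (enough selected samples).} Since $\Pr(F_j=1)\ge\gamma$ and the $F_j$ are i.i.d., the multiplicative Chernoff bound gives $\Pr(M<\gamma N/2)\le\exp(-\gamma N/8)$. Under the stated bound on $N$ (with $C\ge 8$, and using $e^{2R}/\epsilon^2\ge1$), this is at most $\delta/2$. On the event $M\ge\gamma N/2$ we have $M>0$.

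\emph{Step 2 (conditional structure).} Condition on the vector $(F_j)_j$. By independence across $j$, the variables $\{X_j:F_j=1\}$ are i.i.d.\ $\operatorname{Bernoulli}(\sigma(z))$. Hence the $Y_j$ are i.i.d.\ $\pm1$ with $\mathbb E[Y_j]=2\sigma(z)-1=\tanh(z/2)$. Write $\widehat{\calL}(w)=\frac1M\sum_{F_j=1}\log(1+e^{-Y_jw})$. Its derivative is
\[
\widehat{\calL}'(w)=\frac1M\sum_{F_j=1}\bigl(\sigma(w)-X_j\bigr)=\sigma(w)-\bar X,
\]
where $\bar X$ is the mean of the selected $X_j$. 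Indeed, $-Y\sigma(-Yw)=\sigma(w)-X$ for both values $Y=\pm1$. So $\widehat{\calL}$ is convex with increasing derivative, and its minimizer over $[-R,R]$ is obtained by clipping $\sigma^{-1}(\bar X)$ to $[-R,R]$. To show $|\widehat z-z|\le\epsilon$, it therefore suffices to show $\widehat{\calL}'(z-\epsilon)<0<\widehat{\calL}'(z+\epsilon)$ whenever these endpoints lie in $[-R,R]$. If an endpoint lies outside, the corresponding side is automatic, since $|z|\le R$ and the minimizer is constrained to $[-R,R]$.

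\emph{Step 3 (curvature gives signal; Hoeffding gives concentration).} The population derivative is $\sigma(w)-\sigma(z)$. By the mean value theorem, $\sigma(z+\epsilon)-\sigma(z)\ge \epsilon\min_{|u|\le R+1}\sigma'(u)\ge c\,\epsilon e^{-R}$, and the same bound holds on the other side. This is the step I expect to need the most care. When $z+\epsilon$ exceeds $R$, the intermediate point can be as large as $R+1$. This costs only a constant factor $e^{-1}$, and the relevant side is automatic in any case. One must also check that the convexity-based localization still works with the clip. Given the signal, Hoeffding applied conditionally on $(F_j)$ gives
\[
\Pr\bigl(|\bar X-\sigma(z)|\ge c\epsilon e^{-R}\bigr)\le 2\exp(-2Mc^2\epsilon^2e^{-2R}).
\]
On $M\ge\gamma N/2$, this is at most $\delta/2$ once $N\ge C e^{2R}\log(4/\delta)/(\gamma\epsilon^2)$. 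On the complement of this bad event, $\widehat{\calL}'(z-\epsilon)<0<\widehat{\calL}'(z+\epsilon)$.

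Integrating the conditional bound over $(F_j)$ restricted to $\{M\ge\gamma N/2\}$, and adding the Chernoff failure probability from Step 1, gives total failure probability at most $\delta$. This proves the lemma.
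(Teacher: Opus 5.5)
Your proof is correct and follows essentially the same route as the paper: a multiplicative Chernoff bound for $M\ge\gamma N/2$, then, conditional on the selection, curvature $\sigma'\ge e^{-R}/4$ on $[-R,R]$ gives a derivative signal of order $e^{-R}\epsilon$ at $z\pm\epsilon$, and Hoeffding plus monotonicity of $\widehat{\mathcal L}'$ localizes the constrained minimizer. Your closed form $\widehat{\mathcal L}'(w)=\sigma(w)-\bar X$ is a small simplification, since one Hoeffding bound on $\bar X$ handles both sides at once; the $R+1$ caution is unnecessary, because the signal is only needed when $z\pm\epsilon\in[-R,R]$, where $\kappa_R\ge e^{-R}/4$ applies directly.
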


\begin{proofthmfatrel}
For each coordinate $i$, take
$F_j=\mathbf 1\{(\bx^{(j)})^{\oplus i}\in S\}$ and
$X_j=x_i^{(j)}$. The fatness assumption gives
$\Pr(F_j=1)\ge\gamma$. Moreover, the event $F_j=1$ depends only on
$\bx_{-i}^{(j)}$: it holds precisely when both completions
$(0,\bx_{-i}^{(j)})$ and $(1,\bx_{-i}^{(j)})$ belong to $S$.
The product structure therefore implies
\[
x_i^{(j)}\mid\{F_j=1\}
\sim
\operatorname{Bernoulli}(p_i)
=
\operatorname{Bernoulli}(\sigma(z_i)).
\]
Thus Lemma~\ref{lem:fat_rel} applies to every coordinate.

Apply the lemma with target accuracy $\epsilon/\sqrt n$ and failure
probability $\delta/n$. A union bound gives
\[
\Pr\left(
\max_{i\in[n]}|\widehat z_i-z_i|>\frac{\epsilon}{\sqrt n}
\right)
\le
\sum_{i=1}^n
\Pr\left(
|\widehat z_i-z_i|>\frac{\epsilon}{\sqrt n}
\right)
\le\delta.
\]
On the complementary event,
$\|\widehat{\bz}-\bz\|_2\le
\sqrt n\|\widehat{\bz}-\bz\|_\infty\le\epsilon$.
The asserted sample bound is exactly the resulting requirement from
Lemma~\ref{lem:fat_rel}.
\end{proofthmfatrel}

\subsubsection{Proof of Lemma \ref{lem:fat_rel}}
\begin{prooflemfatrel}
Let
\[
\ell_y(w):=\log(1+e^{-yw}),
\qquad
\calL(w):=\E[\ell_Y(w)\mid F=1].
\]
Since $Y\mid\{F=1\}$ has logistic natural parameter $z$,
\[
\calL'(z)=0,
\qquad
\calL''(w)
=
\sigma(w)(1-\sigma(w))
\ge
\frac{e^R}{(1+e^R)^2}
=:\kappa_R
\quad
\text{for all }w\in[-R,R].
\]
In particular, $\kappa_R\ge e^{-R}/4$. Whenever $z+\epsilon\le R$,
strong convexity gives $\calL'(z+\epsilon)\ge\kappa_R\epsilon$;
whenever $z-\epsilon\ge-R$, it gives
$\calL'(z-\epsilon)\le-\kappa_R\epsilon$.

Condition on the selected index set and on $M=m\ge1$. The selected
variables are independent draws from the conditional Bernoulli law.
Furthermore, $\ell_Y'(w)=-Y\sigma(-Yw)\in[-1,1]$. Hoeffding's
inequality therefore yields
\[
\Pr\left(
\widehat{\calL}'(z+\epsilon)\le0
\mid M=m
\right)
\le
\exp\left(-\frac{m\kappa_R^2\epsilon^2}{2}\right)
\]
whenever the upper endpoint lies in $[-R,R]$, and the same bound holds
for
$\Pr(\widehat{\calL}'(z-\epsilon)\ge0\mid M=m)$.
If an endpoint lies outside $[-R,R]$, the corresponding event
$\widehat z>z+\epsilon$ or $\widehat z<z-\epsilon$ is impossible.
By convexity, the two gradient bounds thus imply
$|\widehat z-z|\le\epsilon$.

Writing $q:=\Pr(F=1)\ge\gamma$, we have
$M\sim\operatorname{Bin}(N,q)$, and the multiplicative Chernoff bound
gives
\[
\Pr\left(M<\frac{\gamma N}{2}\right)
\le
\Pr\left(M<\frac{qN}{2}\right)
\le
\exp\left(-\frac{\gamma N}{8}\right).
\]
Consequently,
\[
\Pr(|\widehat z-z|>\epsilon)
\le
\exp\left(-\frac{\gamma N}{8}\right)
+
2\exp\left(-\frac{\gamma N\kappa_R^2\epsilon^2}{4}\right).
\]
Using $\kappa_R^2\ge e^{-2R}/16$ shows that the sample size in the
lemma makes the right-hand side at most $\delta$, after increasing the
universal constant $C$.
\end{prooflemfatrel}
\subsubsection{Bisection Algorithm for Learning in Relative Error}
Recall that in the previous section, we proved that the empirical loss $\hat \calL_i$ is convex for every $i\in[n]$.
For each coordinate \(i\in[n]\), the empirical objective is a one-dimensional convex function. It is therefore unnecessary to run a general high-dimensional convex optimization method. Instead, we minimize each coordinate-wise empirical loss by bisection on its derivative. Let \(y_i^{(j)}:=2x_i^{(j)}-1\in\{-1,1\}\), and let $\textsf{FLIP}(\bx^{(j)}, i)$ denote whether the \(j\)-th sample is flippable in coordinate \(i\). Our goal is then to minimize 
\[
    \nabla \hat{\calL}_i(w)
    =
    -\frac{1}{M_i}
    \sum_{j\in\calJ_i}
    y_i^{(j)}
    \sigma(-y_i^{(j)}w),
\]
where \(\sigma(t)=1/(1+e^{-t})\). 

\begin{algorithm}[H]
\caption{Coordinate-wise minimization of the empirical logistic loss}
\label{alg:coordinate-loss-minimization}
\begin{algorithmic}[1]
\Require Samples \(\bx^{(1)},\dots,\bx^{(N)}\sim\mu_{\bz\mid S}\), truncation set \(S\), radius \(R\), optimization tolerance \(\tau>0\)
\Ensure Estimate \(\hat{\bz}\in[-R,R]^n\)

\For{\(i=1,\dots,n\)}
    \State \(\calJ_i\gets\{j\in[N]:(\bx^{(j)})^{\oplus i}\in S\}\), \(M_i\gets|\calJ_i|\)
    \If{\(M_i=0\)}
        \State \(\hat z_i\gets0\)
    \Else
    \State Define \(g_i(w):=\nabla \hat{\calL}_i(w)\)
    \If{\(g_i(-R)\ge 0\)}
        \State \(\hat z_i\gets -R\)
    \ElsIf{\(g_i(R)\le 0\)}
        \State \(\hat z_i\gets R\)
    \Else
        \State \(a\gets -R\), \(b\gets R\)
        \While{\(b-a>\tau\)}
            \State \(m\gets (a+b)/2\)
            \If{\(g_i(m)\le 0\)}
                \State \(a\gets m\)
            \Else
                \State \(b\gets m\)
            \EndIf
        \EndWhile
        \State \(\hat z_i\gets (a+b)/2\)
    \EndIf
    \EndIf
\EndFor

\State \Return \(\hat{\bz}=(\hat z_1,\dots,\hat z_n)\)
\end{algorithmic}
\end{algorithm}
\emph{Remarks:} For a fixed coordinate \(i\), evaluating \(g_i(w)\) requires summing over the \(N\) samples, and therefore costs \(\calO(N)\) arithmetic operations. Bisection over an interval of length \(2R\) to accuracy \(\tau\) requires $\calO(\log(R/\tau))$ operations \cite{BV04}. Therefore, running the bisection over all coordinates yields a total runtime of $    \calO\left(
        nN\log\frac{R}{\tau}
    \right).$
The statistical proof shows that the exact empirical minimizer \(\widetilde z_i\) lies within \(\epsilon\) of \(z_i\) with high probability. The algorithm returns \(\hat z_i\) satisfying \(|\hat z_i-\widetilde z_i|\le \tau\). Therefore,
\[
    |\hat z_i-z_i|
    \le
    |\hat z_i-\widetilde z_i|
    +
    |\widetilde z_i-z_i|
    \le
    \tau+\epsilon.
\]
Thus choosing, for example, \(\tau=\epsilon\) only changes constants in the final estimation guarantee. For an \(\ell_2\) guarantee, one may instead take \(\tau=\epsilon/\sqrt n\), giving total runtime $    \calO\left(
        nN\log\frac{R\sqrt n}{\epsilon}
    \right).$

\subsection{Proof of Theorem \ref{thm:fat_abs}}
In this brief section, we prove Theorem~\ref{thm:fat_abs}. Our estimator $\hat \bp$ is constructed coordinate by coordinate by taking the empirical average of all samples that are flippable in coordinate $i$.

\begin{algorithm}[H]
\caption{Learning Bernoulli Probabilities from Flippable Samples}
\label{alg:learn-bernoulli-flippable}
\begin{algorithmic}[1]
\Require Samples \(\bx^{(1)},\dots,\bx^{(N)} \sim \mu_{\bp\mid S}\), truncation set \(S\subseteq \{0,1\}^n\)
\Ensure Estimate \(\hat{\bp}\in[0,1]^n\)

\For{\(i=1,\dots,n\)}
    \State \(N_i\gets0\), \(S_i\gets0\)
    \For{\(j=1,\dots,N\)}
        \If{$\textsf{FLIP}(\bx^{(j)}, i)$}
            \State \(N_i \gets N_i+1\) \Comment{Number of samples flippable in $i$}
            \State \(S_i \gets S_i + x_i^{(j)}\) \Comment{Running sum of flippable samples}
        \EndIf
    \EndFor
    \If{\(N_i=0\)}
        \State \(\hat p_i\gets1/2\)
    \Else
        \State \(\hat p_i \gets S_i/N_i\)
    \EndIf

\EndFor

\State \Return \(\hat{\bp}=(\hat p_1,\dots,\hat p_n)\)
\end{algorithmic}
\end{algorithm}
\begin{proofthmfatabs}
For each coordinate $i\in[n]$, let
\[
q_i
:=
\Pr_{\bx\sim\mu_{\bp\mid S}}(\bx^{\oplus i}\in S)
\ge\gamma,
\]
and let $N_i$ be the number of samples flippable in coordinate $i$.
Then $N_i\sim\operatorname{Bin}(N,q_i)$, so the multiplicative Chernoff
bound gives
\[
\Pr\left(N_i<\frac{\gamma N}{2}\right)
\le
\Pr\left(N_i<\frac{q_iN}{2}\right)
\le
\exp\left(-\frac{\gamma N}{8}\right).
\]
Whenever $N_i>0$, define
\[
\hat p_i
:=
\frac{1}{N_i}
\sum_{\substack{j\in[N]:\\(\bx^{(j)})^{\oplus i}\in S}}
x_i^{(j)}.
\]
As in the proof of Theorem~\ref{thm:fat_rel}, conditional on
flippability the selected coordinates are independent
$\operatorname{Bernoulli}(p_i)$ variables. Thus, conditional on
$N_i=m\ge\gamma N/2$, Hoeffding's inequality gives
\[
\Pr\left(|\hat p_i-p_i|>\epsilon\mid N_i=m\right)
\le
2e^{-2m\epsilon^2}
\le
2e^{-\gamma N\epsilon^2}.
\]
Combining the two bounds and taking a union bound over $i\in[n]$ yields
\[
\Pr\left(\|\hat{\bp}-\bp\|_\infty>\epsilon\right)
\le
n e^{-\gamma N/8}
+
2n e^{-\gamma N\epsilon^2}.
\]
For $\epsilon\in(0,1)$, the right-hand side is at most $\delta$ when
\[
N
\ge
C\frac{1}{\gamma\epsilon^2}
\log\left(\frac{2n}{\delta}\right)
\]
for a sufficiently large universal constant $C$. This proves the
$\ell_\infty$ statement. Applying it with target accuracy
$\epsilon/\sqrt n$ gives
$\|\hat{\bp}-\bp\|_2\le\epsilon$ and the asserted $\ell_2$ sample
complexity.
\end{proofthmfatabs}

\section{Omitted Proofs of Section \ref{sec:influence}}
\label{sec:minimal-flip-estimation}

Before proving Theorem~\ref{thm:influence}, we introduce some notation.

For every nonempty $I\subseteq[n]$ with $|I|\le k$ and every
$s\in\{\pm1\}^I$ satisfying $s_{\min I}=+1$, we call $(I,s)$ a
\emph{candidate pair}.  Thus, for each such $I$, the candidates contain
exactly one representative of every antipodal pair $\{s,-s\}$.  For a
candidate $(I,s)$, let $\bw_{I,s}\in\{0,\pm1\}^n$ be the extension
of $s$ by zero outside $I$, and define the paired minimal-flip set
\begin{equation*}
E_{I,s}
:=
\left\{\bx\in S:
\begin{array}{l}
\bs_I(\bx)\in\{s,-s\},\quad
\bx^{\oplus I}\in S,\\[1mm]
\bx^{\oplus J}\notin S
\quad\text{for every }\varnothing\neq J\subsetneq I
\end{array}
\right\},
\label{eq:paired-event}
\end{equation*}
and its probability
\[
p_{I,s}:=\Pr(\bx\in E_{I,s}).
\]

\begin{theoreminf}
Fix $k$, and let $\bx^{(1)},\ldots,\bx^{(N)}$ be independent random variables drawn from $\mu_{\bz\mid S}$, where $\mu_{\bz\mid S}$ satisfies Assumptions~\ref{assmpt:minimal} and~\ref{asmp:anti}, and $\|\bz\|_\infty\le R$. There is a universal constant $C>0$ such that, for every $\epsilon,\delta\in(0,1)$, if
\[
N
\ge
C\frac{2^k e^{2kR}}{\gamma\lambda_s\epsilon^2}
\left(
k\log(2n)+\log\frac{4k}{\delta}
\right),
\]
then a polynomial-time algorithm returns an estimate $\widehat{\bz}$ satisfying
$\|\widehat{\bz}-\bz\|_2\le\epsilon$ with probability at least $1-\delta$.
\end{theoreminf}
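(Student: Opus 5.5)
The plan follows the two-lemma structure from the proof sketch: a uniform data event for all candidate pairs, then a deterministic stability bound for weighted least squares.

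\textbf{Step 1 (mass of the selected pairs).} For $I\in\calF$, Assumption~\ref{assmpt:minimal} gives $\Pr(\bx^{\oplus I}\in S)\ge 2\gamma$. For each of the $2^{|I|}-2$ proper nonempty subsets $J$, it gives $\Pr(\bx^{\oplus J}\in S)\le \gamma/(2^{|I|}-2)$. A union bound then yields $\Pr(B_I)\ge 2\gamma-\gamma=\gamma$. The event $B_I$ splits into $2^{|I|-1}$ antipodal classes $E_{I,s}$, so by pigeonhole the pair containing the maximizer $t_I$ satisfies $p_{I,s_I}\ge \gamma/2^{|I|-1}\ge\gamma/2^k$.

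\textbf{Step 2 (each pair is a logistic problem).} Fix a candidate $(I,s)$ and condition on $\bx_{-I}$. Membership in $E_{I,s}$ is invariant under $\bx\mapsto\bx^{\oplus I}$, since the definition is symmetric in $\bx$ and $\bx^{\oplus I}$: flipping $J$ from $\bx^{\oplus I}$ is flipping $I\setminus J$ from $\bx$. Conditional on $E_{I,s}$ and $\bx_{-I}$, the orientation $A\in\{\pm1\}$ (with $\bs_I=As$) therefore takes two values with likelihood ratio $\exp(\langle \bw_{I,s},\bz\rangle)$, because $\bx_I^\top\bz_I=\tfrac12(\bs_I^\top\bz_I+\mathbf 1^\top\bz_I)$. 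This gives exactly the displayed law with $y_{I,s}=\langle\bw_{I,s},\bz\rangle$. The law does not depend on $\bx_{-I}$, so it also holds conditional on $E_{I,s}$ alone.

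I will apply Lemma~\ref{lem:fat_rel} with $F_j=\mathbf 1\{\bx^{(j)}\in E_{I,s}\}$, $X_j=(A_j+1)/2$, natural parameter $y_{I,s}$, and radius $kR$ (since $|y_{I,s}|\le kR$). This needs $p_{I,s}\ge\gamma/2^{k+1}$, and yields error $\eta$ with failure probability $\delta'$ once
\[
N\gtrsim \frac{2^k e^{2kR}}{\gamma\eta^2}\log\frac{4}{\delta'}.
\]

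\textbf{Step 3 (thresholding).} The learner computes empirical masses $\hat p_{I,s}$ and retains the pairs with $\hat p_{I,s}\ge \gamma/2^{k+1}$. By Chernoff and a union bound over at most $k(2n)^k$ pairs, with $\delta'=\delta/(2k(2n)^k)$, three things hold simultaneously on one event. Every selected pair $(I,s_I)$, $I\in\calF$, is retained. Every retained pair has true mass at least $\gamma/2^{k+2}$. Every retained pair has $|\hat y_{I,s}-y_{I,s}|\le\eta$. The union bound contributes $\log(k(2n)^k/\delta)\lesssim k\log(2n)+\log(4k/\delta)$, which matches the theorem.

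\textbf{Step 4 (recovery).} Let $\mathcal R$ be the retained set. The learner solves a semidefinite feasibility problem for weights $\pi\ge0$ on $\mathcal R$ with $\sum\pi=1$ and $\sum\pi_{I,s}\bw_{I,s}\bw_{I,s}^\top\succeq\lambda_s I_n$. The uniform weights on $\{(I,s_I):I\in\calF\}\subseteq\mathcal R$ certify feasibility by Assumption~\ref{asmp:anti}, so an approximate solver gives $\lambda_s/2$. The learner then returns $\widehat\bz=\arg\min_v\sum\pi_{I,s}(\langle\bw_{I,s},v\rangle-\hat y_{I,s})^2$.

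Writing $G=\sum\pi\bw\bw^\top$ and $e$ for the error vector with $|e|\le\eta$, we get $\widehat\bz-\bz=G^{-1}\sum\pi e\,\bw$. Cauchy–Schwarz gives
\[
\|G^{-1/2}\textstyle\sum\pi e\bw\|_2^2\le \textstyle\sum\pi e^2\le\eta^2,
\]
hence $\|\widehat\bz-\bz\|_2\le\eta\sqrt{2/\lambda_s}$. Setting $\eta=\epsilon\sqrt{\lambda_s/2}$ gives the stated $N$.

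\textbf{Main obstacle.} The main obstacle is Step 3. The threshold must be tuned so the hidden well-conditioned family survives while every retained pair has enough samples for the logistic estimates. A further subtlety is that conditioning on the retained set correlates with the data; the uniform union bound over all candidates sidesteps this. Runtime is $n^{O(k)}$ plus an SDP, which is polynomial for fixed $k$.
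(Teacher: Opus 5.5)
Your proposal is correct and follows essentially the paper's own proof: a union bound and pigeonhole for the minimal-flip mass, Chernoff screening with a uniform union bound over all $k(2n)^k$ candidates, a one-dimensional logistic estimate per retained pair, SDP design weights, and weighted least-squares stability, with only cosmetic differences such as the threshold $p_0/2$ instead of $3p_0/4$ and reusing Lemma~\ref{lem:fat_rel} at radius $kR$ in place of the KL-based Lemma~\ref{lem:fixed-form-estimation}. One slip in Step 1: pigeonhole over the $2^{|I|-1}$ antipodal classes bounds only the heaviest class, which need not contain the single-pattern maximizer $t_I$ that defines $s_I$, so $p_{I,s_I}\ge\gamma/2^{|I|-1}$ is unjustified; the correct bound $p_{I,s_I}\ge m_I(t_I)\ge\gamma/2^{|I|}\ge\gamma/2^k$ is all you actually use downstream.
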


The proof is organized around two main lemmas, (i) ensuring there are sufficiently many flip directions to learn the underlying parameters, and (ii) ensuring that the linear system formed by these directions is well-conditioned.

The final subsection first verifies \eqref{eq:informative-design}
directly from Assumptions~\ref{assmpt:minimal} and~\ref{asmp:anti},
and then proves Theorem~\ref{thm:influence} by combining these results.
\subsection{The Algorithm}
 The estimator never needs to know or sample from the hidden support
family.  It works only with candidate pairs
observed in the data and computes well-conditioned design weights over
those pairs.
For the remainder of this section, set
\[
p_0:=\frac{\gamma}{2^k}.
\]
The analysis uses the following informative-design condition: there is
an unknown distribution $\pi_\star$ over the candidate pairs $(I,s)$
such that
\begin{equation*}
p_{I,s}\ge p_0
\quad\text{for every }(I,s)\in\operatorname{supp}(\pi_\star),
\qquad
\E_{(I,s)\sim\pi_\star}
\!\left[
\bw_{I,s}\bw_{I,s}^{\mathsf T}
\right]
\succeq
\lambda_s I_n.
\label{eq:informative-design}
\end{equation*}
The inputs $k,R,p_0$, and $\lambda_s$ are known to the learner: $R$ is a
valid upper bound on $\|\bz\|_\infty$, while $p_0$ and $\lambda_s$
are valid lower bounds in \eqref{eq:informative-design}.  The witness
$\pi_\star$ is not known.

Given samples
$\bx^{(1)},\ldots,\bx^{(N)}$, we estimate each paired-event mass
by
\[
\widehat p_{I,s}
:=
\frac1N
\sum_{\ell=1}^N
\mathbf 1\{\bx^{(\ell)}\in E_{I,s}\}.
\]
For each support $I$, a sample can contribute to only one canonical sign
pair.  Thus the implementation need only materialize pairs that actually
appear; every unobserved pair has $\widehat p_{I,s}=0$.
Retain
\begin{equation*}
\widehat{\calR}
:=
\left\{
(I,s):
\widehat p_{I,s}\ge\frac{3p_0}{4}
\right\}.
\label{eq:retained-set}
\end{equation*}
For every candidate $(I,s)$ that occurs at least once in the sample,
define $\widehat y_{I,s}$ to be the constrained maximum-likelihood
estimator
\begin{equation*}
\widehat y_{I,s}
\in
\operatorname*{arg\,min}_{|u|\le |I|R}
\sum_{\ell:\,\bx^{(\ell)}\in E_{I,s}}
\left[
\log\left(2\cosh\frac{u}{2}\right)
-
\frac{u}{2}\left(
2\mathbf 1\{\bs_I(\bx^{(\ell)})=s\}-1
\right)
\right].
\label{eq:form-estimator}
\end{equation*}
If the pair never occurs, set $\widehat y_{I,s}:=0$.  The algorithm
computes and uses this quantity only for retained pairs; the threshold in
\eqref{eq:retained-set} ensures that their defining sums are nonempty.

Next, find weights $q$ on $\widehat{\calR}$ satisfying
\begin{equation*}
\begin{aligned}
q_{I,s}&\ge0
&&\text{for every }(I,s)\in\widehat{\mathcal R},\\
\sum_{(I,s)\in\widehat{\calR}}q_{I,s}&=1,
&\qquad
\sum_{(I,s)\in\widehat{\calR}}
q_{I,s}
\bw_{I,s}\bw_{I,s}^{\mathsf T}
&\succeq
\frac{\lambda_s}{2}I_n.
\end{aligned}
\label{eq:observable-design}
\end{equation*}
We then solve the following minimization problem.
\begin{equation*}
\widehat{\bz}
\in
\operatorname*{arg\,min}_{\bu\in\mathbb R^n}
\sum_{(I,s)\in\widehat{\calR}}
q_{I,s}
\left(
\widehat y_{I,s}
-
\langle \bw_{I,s},\bu\rangle
\right)^2.
\label{eq:weighted-least-squares}
\end{equation*}

\subsection{Finding Directions}
In this section, we wish to demonstrate that given Assumptions \ref{assmpt:minimal}, \ref{asmp:anti}, we can find sufficiently many well conditioned directions to learn the underlying parameters. 
We note that there are at most
\[
\sum_{r=1}^k2^{r-1}\binom nr
\le
k(2n)^k
\]
candidate pairs.  This elementary count is the only combinatorial factor
in the concentration arguments below. Our primary result in this section is demonstrating the following sample complexity bound on estimating linear forms.

\begin{lemma}[Uniform data event]
\label{lem:uniform-data-event}
There is a universal constant $C>0$ such that, if
\begin{equation*}
N
\ge
C
\left(
\frac1{p_0}
+
\frac{e^{2kR}}{p_0\eta^2}
\right)
\left(
k\log(2n)+\log\frac{4k}{\delta}
\right),
\label{eq:uniform-data-bound}
\end{equation*}
then, with probability at least $1-\delta$, all three statements below
hold simultaneously:
\begin{enumerate}
    \item
    $\operatorname{supp}(\pi_\star)
    \subseteq\widehat{\mathcal R}$;
    \item
    every $(I,s)\in\widehat{\mathcal R}$ satisfies
    $p_{I,s}>p_0/2$;
    \item
    every $(I,s)\in\widehat{\mathcal R}$ satisfies
    \[
    \left|
    \widehat y_{I,s}
    -
    \langle \bw_{I,s},\bz\rangle
    \right|
    \le\eta.
    \]
\end{enumerate}
\end{lemma}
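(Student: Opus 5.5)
The argument has two parts. A mass-estimation part uses multiplicative Chernoff bounds on the counts $N\widehat p_{I,s}\sim\operatorname{Bin}(N,p_{I,s})$. A form-estimation part is a one-dimensional logistic analysis copied from Lemma~\ref{lem:fat_rel}. Both are combined by a union bound over the at most $k(2n)^k$ candidate pairs, which costs $\log(k(2n)^k/\delta)\le k\log(2n)+\log(4k/\delta)$ up to constants.

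\emph{Items 1 and 2.} For any $(I,s)\in\operatorname{supp}(\pi_\star)$ we have $p_{I,s}\ge p_0$. The lower-tail Chernoff bound gives $\Pr(\widehat p_{I,s}<\tfrac34 p_{I,s})\le \exp(-p_{I,s}N/32)\le\exp(-p_0N/32)$, so such a pair is retained. For any pair with $p_{I,s}\le p_0/2$, being retained requires $\widehat p_{I,s}\ge \tfrac34p_0\ge\tfrac32 p_{I,s}$. The upper-tail bound, applied to a dominating binomial with mean $p_0N/2$ (monotonicity in $p$), bounds this by $\exp(-c\,p_0N)$. A union bound over all candidates then costs a factor $k(2n)^k$, which is absorbed by the term $\frac{C}{p_0}(k\log(2n)+\log\frac{4k}{\delta})$.

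\emph{Item 3.} The first step is the structural fact that, conditional on $\bx\in E_{I,s}$, the orientation $A=2\mathbf 1\{\bs_I(\bx)=s\}-1$ has the stated logistic law with parameter $y_{I,s}=\langle\bw_{I,s},\bz\rangle$. To verify it, note that $E_{I,s}$ is invariant under $\bx\mapsto\bx^{\oplus I}$: the conditions ``$\bx^{\oplus I}\in S$'' and ``$\bx^{\oplus J}\notin S$ for proper $J$'' are symmetric, because $(\bx^{\oplus I})^{\oplus J}=\bx^{\oplus(I\setminus J)}$ and $I\setminus J$ ranges over the proper nonempty subsets. Moreover, membership depends only on $\bx_{-I}$ together with the unordered pair $\{\bx_I,\bx_I^{\oplus I}\}$. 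The involution pairs each $\bx$ with $\bx^{\oplus I}$, and the density ratio is $\exp(\langle \bx_I-\bx^{\oplus I}_I,\bz_I\rangle)=\exp(\langle\bs_I(\bx),\bz_I\rangle)=\exp(\pm y_{I,s})$. Since $\mu_{\bz}(S)$ cancels, conditionally on $\bx_{-I}$ the variable $A$ is $\pm1$ with probabilities proportional to $e^{\pm y_{I,s}/2}$. Distinct samples remain independent, so the samples falling in $E_{I,s}$ are i.i.d.\ from this law.

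The estimator is then the constrained logistic MLE on $[-|I|R,|I|R]$, and $|y_{I,s}|\le |I|R\le kR$. Its loss has derivative bounded by $1$ in absolute value and curvature $\tfrac14\operatorname{sech}^2(u/2)\ge c\,e^{-kR}$ on the constraint interval. This is the same setting as Lemma~\ref{lem:fat_rel}, with $R$ replaced by $kR$ and $\gamma$ replaced by $p_{I,s}$.

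For retained pairs, item 2 gives $p_{I,s}>p_0/2$, so with $F_j=\mathbf 1\{\bx^{(j)}\in E_{I,s}\}$ the lemma's argument (Chernoff to get $M\ge p_0N/4$, Hoeffding at $y\pm\eta$, and convexity) yields failure probability at most $\exp(-cp_0N)+2\exp(-c\,p_0N e^{-2kR}\eta^2)$.

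The main obstacle is that $\widehat{\calR}$ is data-dependent, so Lemma~\ref{lem:fat_rel} cannot simply be applied to a random set. I would avoid this by proving the bound for \emph{every} candidate pair with $p_{I,s}>p_0/2$, fixed in advance, and taking a union bound over all $k(2n)^k$ of them. On the intersection with the item-2 event, every retained pair lies in this deterministic family, so item 3 follows. Summing the failure probabilities, each at most $\delta/(4k(2n)^k)$ under the stated $N$, and adjusting $C$ gives total failure probability at most $\delta$.
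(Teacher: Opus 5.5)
Your proposal is correct and follows the paper's proof essentially step for step: Chernoff screening at the $3p_0/4$ threshold with binomial domination for items 1--2, the $\bx\mapsto\bx^{\oplus I}$ involution to get the conditional logistic law, and a union bound over the deterministic family of candidates with $p_{I,s}>p_0/2$, which handles the data-dependence of $\widehat{\mathcal R}$. The only difference is the one-dimensional tail bound: you reuse the score-Hoeffding-plus-curvature argument of Lemma~\ref{lem:fat_rel}, which yields exactly the $e^{2kR}$ factor in the statement, whereas the paper's exponential-family (KL) Chernoff bound gives the sharper $e^{|I|R}$, more than the lemma requires.
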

Towards demonstrating the above lemma, we first show that the linear form $\langle \bw_{I,s},\bz\rangle$ can be estimated with high probability from the samples in $E_{I,s}$.
\begin{lemma}[Estimation of one linear form]
\label{lem:fixed-form-estimation}
For every candidate $(I,s)$ with $p_{I,s}>0$, conditional on
$\bx\in E_{I,s}$, the orientation
\[
A
:=
\begin{cases}
+1,&\bs_I(\bx)=s,\\
-1,&\bs_I(\bx)=-s
\end{cases}
\]
has distribution
\[
\Pr_u(A=a)
=
\frac{e^{au/2}}{2\cosh(u/2)},
\qquad
a\in\{\pm1\},
\qquad
u:=\langle \bw_{I,s},\bz\rangle.
\]
Moreover, for every candidate $(I,s)$ and every $\eta>0$,
\begin{equation*}
\Pr\left(
\left|
\widehat y_{I,s}
-
\langle \bw_{I,s},\bz\rangle
\right|>\eta
\right)
\le
\exp\left(-\frac{Np_{I,s}}8\right)
+
2\exp\left(
-\frac{Np_{I,s}\eta^2}{16e^{|I|R}}
\right).
\label{eq:fixed-form-tail}
\end{equation*}
\end{lemma}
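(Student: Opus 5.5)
The plan has two parts: first identify the conditional law of the orientation $A$ on $E_{I,s}$, then reduce the tail bound to Lemma~\ref{lem:fat_rel}, re-run with the appropriate rescaling.

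\textbf{Conditional law of $A$.} The key observation is that membership in $E_{I,s}$ is invariant under the involution $\bx\mapsto\bx^{\oplus I}$. Suppose $\bx\in E_{I,s}$ and set $\by:=\bx^{\oplus I}$.
\begin{itemize}
\item $\by\in S$ by definition of $E_{I,s}$.
\item $\by^{\oplus I}=\bx\in S$.
\item For $\varnothing\neq J\subsetneq I$ we have $\by^{\oplus J}=\bx^{\oplus (I\setminus J)}$, and $I\setminus J$ is again a proper nonempty subset of $I$, so $\by^{\oplus J}\notin S$.
\item $\bs_I(\by)=-\bs_I(\bx)\in\{s,-s\}$.
\end{itemize}
Hence $E_{I,s}$ is partitioned into pairs $\{\bx,\bx^{\oplus I}\}$ with $\bs_I(\bx)=s$ and $\bs_I(\bx^{\oplus I})=-s$; these share $\bx_{-I}$.

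Within a pair, the product form gives $\mu_{\bz}(\bx)/\mu_{\bz}(\bx^{\oplus I})=\exp(\langle \bx_I-\bx^{\oplus I}_I,\bz_I\rangle)$. Since $\bx_I-(\mathbf 1-\bx_I)=\bs_I(\bx)=s$, this ratio equals $e^{\langle \bw_{I,s},\bz\rangle}=e^{u}$. The truncation normalizer cancels. So conditional on the pair, $A=+1$ with probability $e^{u/2}/(e^{u/2}+e^{-u/2})$, which is the claimed law. Because this is the same for every pair, it also holds after averaging over pairs, i.e., conditional on $E_{I,s}$.

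\textbf{Tail bound.} Apply Lemma~\ref{lem:fat_rel} with $F_\ell=\mathbf 1\{\bx^{(\ell)}\in E_{I,s}\}$ and $q:=\Pr(F=1)=p_{I,s}$.

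By the first part, conditionally on $F_\ell=1$ the orientations are i.i.d. with $\Pr(A=+1)=\sigma(u)$. The objective for $\widehat y_{I,s}$, namely $\log(2\cosh(u/2))-uA/2$, equals $\log(1+e^{-Au})$ up to an additive constant. So $\widehat y_{I,s}$ is exactly the logistic estimator of Lemma~\ref{lem:fat_rel}, constrained to $|u|\le |I|R$. The constraint is valid because $|u|\le\|\bw_{I,s}\|_1\|\bz\|_\infty=|I|R$.

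Rather than quoting the lemma's final sample bound, I would repeat its proof to obtain the explicit tail.
\begin{itemize}
\item Multiplicative Chernoff gives $\Pr(M<Np_{I,s}/2)\le e^{-Np_{I,s}/8}$.
\item On $M=m\ge Np_{I,s}/2$, the population loss has curvature $\sigma(w)(1-\sigma(w))\ge \kappa\ge e^{-|I|R}/4$ on $[-|I|R,|I|R]$. So $|\calL'(u\pm\eta)|\ge\kappa\eta$ whenever the endpoint is feasible; an infeasible endpoint makes that side of the failure event impossible.
\item The per-sample gradients lie in $[-1,1]$, so Hoeffding gives each side probability at most $\exp(-m\kappa^2\eta^2/2)$.
\item Convexity then yields $|\widehat y_{I,s}-u|\le\eta$.
\end{itemize}

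\textbf{Main obstacle.} The only delicate point is matching the stated exponent $Np_{I,s}\eta^2/(16e^{|I|R})$, which has $e^{|I|R}$ rather than $e^{2|I|R}$. A crude bound $\kappa^2\ge e^{-2|I|R}/16$ combined with Hoeffding only gives $e^{-2|I|R}$. To recover the sharper exponent I would replace Hoeffding by Bernstein. The gradient summand $-A\sigma(-Aw)+\calL'(w)$ has variance at most $\sigma(w)(1-\sigma(w))\lesssim\kappa$ near the truth, and for small $\eta$ the mean gap is about $\kappa\eta$. The Bernstein exponent is therefore of order $m\kappa^2\eta^2/\kappa=m\kappa\eta^2$, giving $\exp(-Np_{I,s}\eta^2/(16e^{|I|R}))$ after adjusting constants.

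If only Hoeffding is used, the bound is $2\exp(-Np_{I,s}\eta^2/(64e^{2|I|R}))$. That weaker form is still what Lemma~\ref{lem:uniform-data-event} needs, since its sample requirement already carries $e^{2kR}$.
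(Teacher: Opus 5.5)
Your derivation of the conditional law of $A$ is correct and is the paper's argument: the involution $\bx\mapsto\bx^{\oplus I}$ preserves $E_{I,s}$, and each matched pair has likelihood ratio $e^{u}$.

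\textbf{The gap is in the tail bound.} The rigorous part of your argument, Hoeffding on the gradient as in Lemma~\ref{lem:fat_rel}, only gives $2\exp(-Np_{I,s}\eta^2/(64e^{2|I|R}))$. That is weaker than the claimed bound by a factor $e^{|I|R}$ in the exponent. The Bernstein patch does not close the gap, for four reasons.
\begin{enumerate}
\item It handles only the regime where $\sigma'$ is nearly constant on $[u,u+\eta]$, that is, small $\eta$.
\item The variance comparison is off. The summand $\sigma(w)-X$ has variance $\sigma(u)(1-\sigma(u))$, not $\sigma(w)(1-\sigma(w))$. This quantity is bounded \emph{below} by $\kappa$, since $\kappa$ is a minimum of $\sigma'$, not above.
\item It ignores the linear term in Bernstein's denominator. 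That term dominates once $\eta\gtrsim1$, for instance when $u$ is near $-|I|R$.
\item By your own account it recovers the exponent only ``after adjusting constants,'' so the stated constant $16$ is never obtained.
\end{enumerate}
As written, the displayed inequality is not proved.

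\textbf{The missing idea} is to use the relative-entropy (Cramér--Chernoff) form of the tail bound instead of a variance-based inequality.
\begin{enumerate}
\item You already have the reduction. By convexity of the constrained objective, the event $\{\widehat y_{I,s}\ge u+\eta\}$ forces the empirical gradient at $u+\eta$ to be nonpositive. Equivalently, the empirical mean of the orientations is at least $\tanh((u+\eta)/2)$.
\item For this one-parameter exponential family, the Chernoff bound gives probability at most $\exp(-m\,D(\Pr_{u+\eta}\|\Pr_u))$.
\item The KL divergence is the Bregman divergence of the log-partition $\psi(v)=\log(2\cosh(v/2))$. Hence $D(\Pr_{u+\eta}\|\Pr_u)\ge\frac{\eta^2}{2}\min_{|v|\le|I|R}\psi''(v)\ge\eta^2/(8e^{|I|R})$, because $\psi''(v)=\frac14\operatorname{sech}^2(v/2)\ge e^{-|v|}/4$.
\item This exponent is linear, not quadratic, in the curvature. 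Together with the symmetric lower tail and $m\ge Np_{I,s}/2$, it gives exactly $2\exp(-Np_{I,s}\eta^2/(16e^{|I|R}))$. This is the paper's route.
\end{enumerate}

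Your closing remark is correct. The weaker Hoeffding form is enough for Lemma~\ref{lem:uniform-data-event}, whose sample requirement already carries $e^{2kR}$. So the gap affects only the lemma as stated, not Theorem~\ref{thm:influence}.
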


\subsubsection{Proof of Lemma~\ref{lem:uniform-data-event}}
\begin{proof}
Fix a candidate with $p_{I,s}\ge p_0$.  A multiplicative Chernoff bound
gives
\[
\Pr\left(
\widehat p_{I,s}<\frac{3p_0}{4}
\right)
\le
\Pr\left(
\widehat p_{I,s}<\frac{3p_{I,s}}{4}
\right)
\le
\exp\left(-\frac{Np_0}{32}\right).
\]
Conversely, if $p_{I,s}\le p_0/2$, then stochastic monotonicity of the
binomial distribution and the upper-tail Chernoff bound give
\[
\begin{aligned}
\Pr\left(
\widehat p_{I,s}\ge\frac{3p_0}{4}
\right)
&\le
\Pr\left(
\frac1N\operatorname{Bin}\left(N,\frac{p_0}{2}\right)
\ge
\frac{3p_0}{4}
\right)\\
&\le
\exp\left(-\frac{Np_0}{20}\right)
\le
\exp\left(-\frac{Np_0}{32}\right).
\end{aligned}
\]
There are at most $k(2n)^k$ candidate pairs.  Therefore a union bound
shows that the probability that either the first or second conclusion
of the lemma fails is at most
\[
2k(2n)^k\exp\left(-\frac{Np_0}{32}\right).
\]

For the third conclusion, apply
Lemma~\ref{lem:fixed-form-estimation} to every candidate with
$p_{I,s}\ge p_0/2$.  A second union bound shows that the probability
that any such estimate has error greater than $\eta$ is at most
\[
k(2n)^k
\left[
\exp\left(-\frac{Np_0}{16}\right)
+
2\exp\left(
-\frac{Np_0\eta^2}{32e^{kR}}
\right)
\right].
\]
The sample-size condition \eqref{eq:uniform-data-bound}, for a
sufficiently large universal constant $C$, makes the sum of the two
displayed failure probabilities at most $\delta$.

The estimation event is uniform over the deterministic collection of
all candidate pairs with $p_{I,s}\ge p_0/2$.  On the screening event,
every data-dependent retained pair belongs to this collection.
Consequently, the same samples may be used for screening and estimation;
no sample splitting or independence between the two events is required.
\end{proof}

\subsubsection{Proof of Lemma~\ref{lem:fixed-form-estimation}}
\begin{proof}
We first establish the asserted conditional law.  The map
$T_I(\bx):=\bx^{\oplus I}$ is a bijection between the points in
$E_{I,s}$ satisfying $\bs_I(\bx)=s$ and those satisfying
$\bs_I(\bx)=-s$.  Indeed, if $\mathbf y=T_I(\bx)$, then
$\mathbf y\in S$ and $\mathbf y^{\oplus I}=\bx\in S$.  Moreover, for
every $\varnothing\neq J\subsetneq I$,
\[
\mathbf y^{\oplus J}
=
\bx^{\oplus(I\setminus J)}
\notin S,
\]
because $I\setminus J$ is again a nonempty proper subset of $I$.  Thus
$T_I$ preserves the defining conditions of $E_{I,s}$ and is an
involution.

For every matched pair $\bx,T_I(\bx)$ with $\bs_I(\bx)=s$,
\[
\frac{
\mu_{\bz\mid S}(\bx)
}{
\mu_{\bz\mid S}(T_I(\bx))
}
=
\exp\left(
\langle\bz,\bx-T_I(\bx)\rangle
\right)
=
\exp\left(
\sum_{i\in I}s_i z_i
\right)
=
\exp\left(
\langle \bw_{I,s},\bz\rangle
\right).
\]
Summing over the points with $\bs_I(\bx)=s$ and normalizing by the
total mass of $E_{I,s}$ gives the stated logistic law.

Let $M$ be the number of observations falling in $E_{I,s}$.  If
$p_{I,s}=0$, then \eqref{eq:fixed-form-tail} is immediate, so assume
$p_{I,s}>0$.  Then
$M\sim\operatorname{Bin}(N,p_{I,s})$.  Conditional on the set of indices
counted by $M$, the corresponding observations are independent draws
from the law of $\bx$ given $\bx\in E_{I,s}$.  When $M=0$,
the conditional bound below is trivial; hence fix $M=m\ge1$ and associate
to each of these $m$ observations the orientation $A=+1$ when
    $\bs_I(\bx)=s$ and $A=-1$ when $\bs_I(\bx)=-s$.  By the first part
of the lemma, these orientations are i.i.d. from $\Pr_u$, where
$u:=\langle \bw_{I,s},\bz\rangle$.
Moreover, $|u|\le |I|R$, and \eqref{eq:form-estimator} is the
maximum-likelihood estimator of $u$ constrained to this interval.

As a function of the log-odds parameter, the log-partition function is
$\psi(v):=\log(2\cosh(v/2))$, and
\[
\psi''(v)
=
\frac14\operatorname{sech}^2(v/2)
\ge
\frac{1}{4e^{|I|R}}
\qquad
\text{for every }|v|\le |I|R.
\]
If $u+\eta\le |I|R$, the event
$\widehat y_{I,s}\ge u+\eta$ implies that the empirical mean of the
orientations is at least $\tanh((u+\eta)/2)$.  The exponential-family
Chernoff bound therefore gives
\[
\Pr\left(
\widehat y_{I,s}\ge u+\eta
\mid M=m
\right)
\le
\exp\left(
-mD(\Pr_{u+\eta}\|\Pr_u)
\right).
\]
Writing $D(\cdot\|\cdot)$ for Kullback--Leibler divergence, for this
family
\[
D(\Pr_v\|\Pr_u)
=
\psi(u)-\psi(v)-\psi'(v)(u-v).
\]
The displayed lower bound on $\psi''$ implies
\[
D(\Pr_{u+\eta}\|\Pr_u)
\ge
\frac{\eta^2}{8e^{|I|R}}.
\]
If $u+\eta>|I|R$, the upper-tail event is empty because the estimator is
constrained.  Applying the same argument to the lower tail gives
\[
\Pr\left(
\left|
\widehat y_{I,s}-u
\right|>\eta
\mid M=m
\right)
\le
2\exp\left(
-\frac{m\eta^2}{8e^{|I|R}}
\right).
\]
Finally, the multiplicative Chernoff bound gives
\[
\Pr\left(M<\frac{Np_{I,s}}2\right)
\le
\exp\left(-\frac{Np_{I,s}}8\right).
\]
On the complementary event, substitute
$m\ge Np_{I,s}/2$ into the preceding conditional bound to obtain
\eqref{eq:fixed-form-tail}.
\end{proof}

\subsection{Least-Squares Recovery}
We now demonstrate that given the inputs of the previous section, the linear system formed by the retained candidate pairs is well-conditioned and can be solved to recover the underlying parameters.  The following two lemmas formalize this argument.
\begin{lemma}[Designed recovery]
\label{lem:designed-recovery}
On the event of Lemma~\ref{lem:uniform-data-event}, the feasibility
problem \eqref{eq:observable-design} has a solution and the estimator
\eqref{eq:weighted-least-squares} satisfies
\[
\|\widehat{\bz}-\bz\|_2
\le
\eta\sqrt{\frac{2}{\lambda_s}}.
\]
\end{lemma}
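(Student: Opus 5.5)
The plan has two parts: first show that the feasibility problem \eqref{eq:observable-design} has a solution, then run a deterministic stability argument for weighted least squares.

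For feasibility, I would use conclusion~1 of Lemma~\ref{lem:uniform-data-event}, which gives $\operatorname{supp}(\pi_\star)\subseteq\widehat{\calR}$. Setting $q_{I,s}:=\pi_\star(I,s)$ on $\widehat{\calR}$ (zero off the support) gives nonnegative weights summing to one. By \eqref{eq:informative-design}, their Gram matrix satisfies
\[
G_q:=\sum_{(I,s)\in\widehat{\calR}}q_{I,s}\bw_{I,s}\bw_{I,s}^{\mathsf T}\succeq\lambda_s I_n\succeq\frac{\lambda_s}{2}I_n .
\]
So the witness is feasible, even with slack. Consequently the learner's SDP feasibility step returns some $q$, which need not equal $\pi_\star$, satisfying $G_q\succeq\frac{\lambda_s}{2}I_n$. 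The factor $\tfrac12$ is there so that an approximate SDP solver, run only to constant accuracy in polynomial time, still certifies the bound.

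For stability, fix any such $q$. Since $G_q\succ0$, the weighted least-squares objective is strictly convex and its minimizer is
\[
\widehat{\bz}=G_q^{-1}\sum q_{I,s}\widehat y_{I,s}\bw_{I,s}.
\]
Write $\widehat y_{I,s}=\langle\bw_{I,s},\bz\rangle+e_{I,s}$, where conclusion~3 gives $|e_{I,s}|\le\eta$ for every retained pair. Then
\[
\widehat{\bz}-\bz=G_q^{-1}\sum q_{I,s}e_{I,s}\bw_{I,s}=G_q^{-1}W^{\mathsf T}Q\be .
\]
Here $W$ has rows $\bw_{I,s}^{\mathsf T}$ and $Q=\operatorname{diag}(q)$. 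To bound this, let $\bd:=\widehat{\bz}-\bz$. The normal equations give $\bd^{\mathsf T}G_q\bd=\sum q_{I,s}e_{I,s}\langle\bw_{I,s},\bd\rangle$. By Cauchy--Schwarz with weights $q$, the right-hand side is at most
\[
\Bigl(\sum q_{I,s}e_{I,s}^2\Bigr)^{1/2}\bigl(\bd^{\mathsf T}G_q\bd\bigr)^{1/2}\le\eta\,\bigl(\bd^{\mathsf T}G_q\bd\bigr)^{1/2},
\]
using $\sum q=1$. Hence $\bd^{\mathsf T}G_q\bd\le\eta^2$. Combining this with $\bd^{\mathsf T}G_q\bd\ge\frac{\lambda_s}{2}\|\bd\|_2^2$ gives $\|\bd\|_2\le\eta\sqrt{2/\lambda_s}$, as claimed.

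The argument is largely routine. The one delicate point is to avoid the crude bound $\|G_q^{-1}\|\,\|W^{\mathsf T}Q\be\|$, which would lose a factor of $\sqrt k$ (from $\|\bw_{I,s}\|_2\le\sqrt k$) and give $\eta\sqrt k/\lambda_s$. Working in the $G_q$-norm as above avoids both this factor and the extra power of $\lambda_s$. I would also note that any $q$ returned by the solver works, not only $\pi_\star$, because the bound uses only $\sum q=1$, $G_q\succeq\frac{\lambda_s}{2}I_n$, and the uniform error bound on retained pairs.
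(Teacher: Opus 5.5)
Your proposal is correct and follows the paper's proof. Feasibility comes from extending $\pi_\star$ by zero to $\widehat{\calR}$ using conclusion~1, and the error bound from a deterministic weighted least-squares stability argument using conclusion~3. Your Cauchy--Schwarz bound in the $G_q$-norm is an elementary version of the paper's step, which bounds the left pseudoinverse $(A^{\mathsf T}A)^{-1}A^{\mathsf T}$ of the matrix with rows $\sqrt{q_{I,s}}\,\bw_{I,s}^{\mathsf T}$ by $1/\sigma_{\min}(A)\le\sqrt{2/\lambda_s}$, and it gives the same constant.
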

Towards proving this lemma, we first show that the weighted least-squares problem is stable to perturbations in the linear forms.
\begin{lemma}[Weighted least-squares stability]
\label{lem:weighted-ls-stability}
Suppose $q$ satisfies
\eqref{eq:observable-design} and
\[
\left|
\widehat y_{I,s}
-
\langle \bw_{I,s},\bz\rangle
\right|
\le\eta
\qquad
\text{whenever }q_{I,s}>0.
\]
Then the minimizer in \eqref{eq:weighted-least-squares} is unique and
\[
\|\widehat{\bz}-\bz\|_2
\le
\eta\sqrt{\frac{2}{\lambda_s}}.
\]
\end{lemma}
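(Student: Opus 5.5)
We need to prove Lemma~\ref{lem:weighted-ls-stability}. Write $G:=\sum_{(I,s)\in\widehat{\calR}} q_{I,s}\bw_{I,s}\bw_{I,s}^{\mathsf T}$. By \eqref{eq:observable-design}, $G\succeq\frac{\lambda_s}{2}I_n$, so $G$ is positive definite.

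The objective in \eqref{eq:weighted-least-squares} is a convex quadratic in $\bu$ with Hessian $2G\succ0$. It is therefore strictly convex, and its minimizer is unique. That minimizer is characterized by the normal equations
\[
G\widehat{\bz}=\sum_{(I,s)} q_{I,s}\,\widehat y_{I,s}\,\bw_{I,s}.
\]

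Next, write the errors as $e_{I,s}:=\widehat y_{I,s}-\langle\bw_{I,s},\bz\rangle$. Since $G\bz=\sum q_{I,s}\langle\bw_{I,s},\bz\rangle\bw_{I,s}$, subtracting this from the normal equations gives
\[
G(\widehat{\bz}-\bz)=\sum_{(I,s)} q_{I,s}\,e_{I,s}\,\bw_{I,s}.
\]
Set $\Delta:=\widehat{\bz}-\bz$ and take the inner product of both sides with $\Delta$:
\[
\Delta^{\mathsf T}G\Delta=\sum q_{I,s}\,e_{I,s}\langle\bw_{I,s},\Delta\rangle .
\]
The right-hand side only involves pairs with $q_{I,s}>0$, where $|e_{I,s}|\le\eta$ by hypothesis. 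Apply Cauchy--Schwarz with the probability weights $q$:
\[
\sum q_{I,s}\,e_{I,s}\langle\bw_{I,s},\Delta\rangle
\le \Bigl(\sum q_{I,s} e_{I,s}^2\Bigr)^{1/2}\Bigl(\sum q_{I,s}\langle\bw_{I,s},\Delta\rangle^2\Bigr)^{1/2}
\le \eta\,(\Delta^{\mathsf T}G\Delta)^{1/2}.
\]
The first factor is at most $\eta$ because $\sum q_{I,s}=1$. The second factor equals $(\Delta^{\mathsf T}G\Delta)^{1/2}$ by definition of $G$. Combining this with the previous identity gives $(\Delta^{\mathsf T}G\Delta)^{1/2}\le\eta$.

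Finally, the design condition $G\succeq\frac{\lambda_s}{2}I_n$ gives $\frac{\lambda_s}{2}\|\Delta\|_2^2\le\Delta^{\mathsf T}G\Delta\le\eta^2$. Hence $\|\Delta\|_2\le\eta\sqrt{2/\lambda_s}$, which is the claimed bound.

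No step is a serious obstacle. The one point needing care is using Cauchy--Schwarz in the $G$-weighted norm rather than bounding $\|G^{-1}\|$ against the unweighted error vector. The latter would lose a factor of $1/\lambda_s$ instead of giving $1/\sqrt{\lambda_s}$.
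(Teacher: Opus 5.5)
Your proof is correct and is essentially the paper's argument. The paper writes $\widehat{\bz}-\bz=(A^{\mathsf T}A)^{-1}A^{\mathsf T}\mathbf e$, where $A$ has rows $\sqrt{q_{I,s}}\,\bw_{I,s}^{\mathsf T}$ and $\|\mathbf e\|_2\le\eta$, and bounds the left pseudoinverse norm by $1/\sigma_{\min}(A)\le\sqrt{2/\lambda_s}$. Your weighted Cauchy--Schwarz step, giving $\|A\Delta\|_2\le\|\mathbf e\|_2\le\eta$, followed by $G\succeq(\lambda_s/2)I_n$, simply unpacks that same bound.
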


\subsubsection{Proof of Lemma~\ref{lem:designed-recovery}}
\begin{proof}
On the event of Lemma~\ref{lem:uniform-data-event},
$\operatorname{supp}(\pi_\star)\subseteq\widehat{\calR}$.  Extend
$\pi_\star$ to $\widehat{\calR}$ by assigning zero mass to every
retained pair outside its support.  By \eqref{eq:informative-design},
\[
\sum_{(I,s)\in\widehat{\calR}}
\pi_\star(I,s)
\bw_{I,s}\bw_{I,s}^{\mathsf T}
\succeq
\lambda_s I_n.
\]
Thus the feasibility problem \eqref{eq:observable-design} has a
solution; indeed, $\pi_\star$ is feasible even with $\lambda_s I_n$ in
place of $(\lambda_s/2)I_n$.

The third conclusion of Lemma~\ref{lem:uniform-data-event} controls
every retained row and therefore every row receiving positive
$q$-mass.  Lemma~\ref{lem:weighted-ls-stability} now gives
\[
\|\widehat{\bz}-\bz\|_2
\le
\eta\sqrt{\frac{2}{\lambda_s}}.
\]
\end{proof}

\subsubsection{Proof of Lemma~\ref{lem:weighted-ls-stability}}
\begin{proof}
For this proof only, define
\[
G
:=
\sum_{(I,s)\in\widehat{\calR}}
q_{I,s}
\bw_{I,s}\bw_{I,s}^{\mathsf T}.
\]
By \eqref{eq:observable-design},
$G\succeq(\lambda_s/2)I_n$, so the weighted least-squares objective is
strictly convex and has a unique minimizer.

Let $A$ be the matrix whose row indexed by $(I,s)$ is
$\sqrt{q_{I,s}}\bw_{I,s}^{\mathsf T}$, omitting rows of
zero weight.  Let $\mathbf e$ be the vector whose corresponding entry is
\[
\sqrt{q_{I,s}}
\left(
\widehat y_{I,s}
-
\langle \bw_{I,s},\bz\rangle
\right).
\]
Then $A^{\mathsf T}A=G$ and
\[
\|\mathbf e\|_2^2
\le
\sum_{(I,s)\in\widehat{\calR}}
q_{I,s}\eta^2
=
\eta^2.
\]
The normal equation*s give
\[
\widehat{\bz}-\bz
=
(A^{\mathsf T}A)^{-1}A^{\mathsf T}\mathbf e.
\]
Since
$\sigma_{\min}(A)\ge\sqrt{\lambda_s/2}$, the norm of the left
pseudoinverse is at most $\sqrt{2/\lambda_s}$.  Therefore
\[
\|\widehat{\bz}-\bz\|_2
\le
\eta\sqrt{\frac{2}{\lambda_s}}.
\]
\end{proof}

\subsection{Proof of Theorem~\ref{thm:influence}}

\begin{proofthminf}
For every $I\in\calF$, Assumption~\ref{assmpt:minimal} gives
\[
\Pr(\bx^{\oplus I}\in S)\ge2\gamma
\quad\text{and}\quad
\sum_{\varnothing\neq J\subsetneq I}
\Pr(\bx^{\oplus J}\in S)
\le\gamma,
\]
where the sum is empty if $|I|=1$.  The union bound therefore gives
\[
\Pr\left(
\bx^{\oplus I}\in S,\ 
\bx^{\oplus J}\notin S
\text{ for every }\varnothing\neq J\subsetneq I
\right)
\ge\gamma.
\]
The events obtained by specifying $\bs_I(\bx)=t$ partition this
minimal-flip event.  Hence, for every $I\in\calF$, there is a sign
pattern $t_I\in\{\pm1\}^I$ such that
\[
\Pr\left(
\bs_I(\bx)=t_I,\ 
\bx^{\oplus I}\in S,\ 
\bx^{\oplus J}\notin S
\text{ for every }\varnothing\neq J\subsetneq I
\right)
\ge
\frac{\gamma}{2^{|I|}}.
\]
By the definition of the canonical maximizer $s_I$ preceding
Assumption~\ref{asmp:anti}, the paired event therefore satisfies
\[
p_{I,s_I}
\ge
\frac{\gamma}{2^{|I|}}
\ge
\frac{\gamma}{2^k}
=p_0.
\]
Let $\pi_\star$ be the uniform distribution on
$\{(I,s_I):I\in\calF\}$.  Assumption~\ref{asmp:anti} gives
\[
\E_{(I,s)\sim\pi_\star}
\left[
\bw_{I,s}\bw_{I,s}^{\mathsf T}
\right]
=
\frac1{|\calF|}
\sum_{I\in\calF}
\bw_{I,s_I}\bw_{I,s_I}^{\mathsf T}
\succeq
\lambda_s I_n.
\]
Thus \eqref{eq:informative-design} holds with
$p_0=\gamma/2^k$ and design parameter $\lambda_s$.

Set
\[
\eta:=\epsilon\sqrt{\frac{\lambda_s}{2}}.
\]
Under the event of Lemma~\ref{lem:uniform-data-event},
Lemma~\ref{lem:designed-recovery} gives
\[
\|\widehat{\bz}-\bz\|_2
\le
\eta\sqrt{\frac{2}{\lambda_s}}
=
\epsilon.
\]

It remains to verify the sample bound.  Since
$\eta^2=\epsilon^2\lambda_s/2$,
\[
\frac{e^{2kR}}{p_0\eta^2}
=
\frac{2e^{2kR}}
     {p_0\lambda_s\epsilon^2}.
\]
Moreover, taking traces in \eqref{eq:informative-design} gives
\[
n\lambda_s
\le
\E_{(I,s)\sim\pi_\star}
\|\bw_{I,s}\|_2^2
=
\E_{(I,s)\sim\pi_\star}|I|
\le
k
\le n.
\]
Thus $\lambda_s\le1$.  Because also $\epsilon\le1$ and
$e^{2kR}\ge1$, the sample-size condition in
Theorem~\ref{thm:influence}, for a sufficiently large universal
constant $C$, implies \eqref{eq:uniform-data-bound}. The statistical
claim follows from Lemma~\ref{lem:uniform-data-event} and
Lemma~\ref{lem:designed-recovery}.

Finally, for each sample the learner queries
$\mathbf 1\{(\bx^{(\ell)})^{\oplus J}\in S\}$ once for every
nonempty $J$ with $|J|\le k$ and caches the answers.  This uses exactly
\[
N\sum_{r=1}^k\binom nr
\]
membership queries.  The screening and likelihood calculations have
size $Nn^{O(k)}$.  The feasibility problem
\eqref{eq:observable-design} has at most $k(2n)^k$ scalar weights and
one $n\times n$ semidefinite constraint.  The hidden design satisfies
the stronger inequality with $\lambda_s I_n$, giving slack
$\lambda_s/2$ relative to \eqref{eq:observable-design}; hence a standard
polynomial-time semidefinite-programming method can return a certified
feasible solution to the required accuracy.  Weighted least squares is
polynomial time as well.

\end{proofthminf}

\section{Omitted Proofs of Section \ref{sec:lower}}
In this section, we prove an information-theoretic lower bound for estimation under sparse truncation.
To this end, we introduce Assouad's lemma, a key tool used extensively in statistical decision theory to prove minimax lower bounds for estimation.
\begin{lemma}[Assouad's Lemma \cite{yu1997assouad}]
Let \(\{P_\omega:\omega\in\{0,1\}^B\}\) be a family of distributions indexed by the binary hypercube, and let \(\{\theta^\omega:\omega\in\{0,1\}^B\}\subseteq\Theta\) be the corresponding parameters. Suppose that, for some \(\rho>0\) and a pseudo-metric $d$ on $\Theta$,
\[
d^2(\theta^\omega,\theta^{\omega'})
\ge
\rho^2 d_H(\omega,\omega')
\qquad
\text{for all } \omega,\omega'\in\{0,1\}^B,
\]
where \(d_H\) denotes Hamming distance. Then, for any estimator \(\widehat\theta\),
\[
\sup_{\omega\in\{0,1\}^B}
\mathbb E_\omega d^2(\widehat\theta,\theta^\omega)
\ge
\frac{\rho^2}{8}
\sum_{b=1}^B
\left(
1-
\operatorname{TV}\!\left(P_{\omega:b=0},P_{\omega:b=1}\right)
\right),
\]
where \(P_{\omega:b=a}\) denotes the mixture distribution obtained by drawing
\(\omega\) uniformly over \(\{0,1\}^B\) conditional on \(\omega_b=a\).
\end{lemma}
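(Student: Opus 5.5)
The plan is the standard reduction from estimation to $B$ independent binary tests, one for each coordinate of the hypercube $\{0,1\}^B$. The key tool is the two-point (Le Cam) identity: for any two distributions $P,Q$ and any test $\psi$,
\[
P(\psi=1)+Q(\psi=0)\ge 1-\operatorname{TV}(P,Q).
\]
Fix an arbitrary estimator $\widehat\theta$, viewed as a measurable function of the data.

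\emph{Step 1: project onto the hypercube.} Define
\[
\widehat\omega\in\operatorname*{arg\,min}_{\omega\in\{0,1\}^B} d(\widehat\theta,\theta^\omega).
\]
The minimum exists because the index set is finite; ties are broken by a fixed rule, so $\widehat\omega$ is a measurable function of the data. For any true $\omega$, the triangle inequality for the pseudo-metric and the minimality of $\widehat\omega$ give
\[
d(\theta^{\widehat\omega},\theta^\omega)\le d(\theta^{\widehat\omega},\widehat\theta)+d(\widehat\theta,\theta^\omega)\le 2\,d(\widehat\theta,\theta^\omega).
\]
Combining this with the separation hypothesis yields
\[
d^2(\widehat\theta,\theta^\omega)\ge \tfrac14\,\rho^2\, d_H(\widehat\omega,\omega)
=\tfrac{\rho^2}{4}\sum_{b=1}^B\mathbf 1\{\widehat\omega_b\neq\omega_b\}.
\]

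\emph{Step 2: replace the supremum by an average.} Lower bound the supremum over $\omega$ by the average over $\omega$ drawn uniformly from $\{0,1\}^B$. Taking expectations in Step 1 gives
\[
\sup_\omega \E_\omega d^2(\widehat\theta,\theta^\omega)\ge \frac{\rho^2}{4}\sum_{b=1}^B \frac{1}{2^B}\sum_{\omega}P_\omega(\widehat\omega_b\neq\omega_b).
\]

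\emph{Step 3: reduce each coordinate to a binary test.} For fixed $b$, split the uniform average according to the value of $\omega_b$. Averaging $P_\omega$ over $\omega$ conditioned on $\omega_b=a$ is, by definition, the mixture $P_{\omega:b=a}$. Hence the inner average equals
\[
\frac12\Bigl[P_{\omega:b=0}(\widehat\omega_b=1)+P_{\omega:b=1}(\widehat\omega_b=0)\Bigr].
\]
Apply the Le Cam identity to the test $\psi=\widehat\omega_b$. This lower bounds the bracket by $1-\operatorname{TV}(P_{\omega:b=0},P_{\omega:b=1})$. The identity itself follows by writing $P(\psi=1)+Q(\psi=0)=1-(Q(A)-P(A))$ with $A=\{\psi=1\}$, and noting $|Q(A)-P(A)|\le\operatorname{TV}(P,Q)$.

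\emph{Step 4: collect constants.} Substituting Step 3 into Step 2 gives the factor $\frac{\rho^2}{4}\cdot\frac12=\frac{\rho^2}{8}$ multiplying $\sum_b\bigl(1-\operatorname{TV}\bigr)$, which is exactly the claimed bound.

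The only delicate point is Step 3. One must recognize that conditioning the uniform prior on $\omega_b=a$ produces precisely the mixture distribution appearing in the statement. One must also check that $\widehat\omega_b$ is a legitimate test, meaning it depends only on the data and not on the true $\omega$. Once that is in place, everything else is the triangle inequality and bookkeeping of constants.
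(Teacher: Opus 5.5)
Your proof is correct and is the standard argument. The global nearest-vertex projection costs exactly the factor $1/4$ through the triangle inequality. Averaging over the uniform prior and splitting on $\omega_b$ produces precisely the mixtures $P_{\omega:b=0}$ and $P_{\omega:b=1}$. The two-point bound then supplies the remaining factor $1/2$, giving $\rho^2/8$.

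The paper does not prove this lemma; it only cites Yu (1997). As I recall it, the classical formulation there differs somewhat. The loss is assumed to split additively into coordinatewise pseudo-distances, and the bound is expressed through the affinity of neighbouring pairs $P_\omega, P_{\omega'}$ that differ in one coordinate. Your single projection step is what allows an arbitrary pseudo-metric under the squared-separation hypothesis exactly as stated. It also lands directly on the mixture form. That form is at least as strong as a neighbouring-pair bound, because by convexity $\operatorname{TV}(P_{\omega:b=0},P_{\omega:b=1})$ is at most the largest total variation between hypotheses differing only in coordinate $b$.

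In the proof of Theorem~\ref{thm:lower_bound}, the authors never invoke the lemma in its stated form; instead they rerun your argument blockwise. They decode each block to the nearer of $\mathbf{z}_b^{(0)}$ and $\mathbf{z}_b^{(1)}$, using additivity of $\|\cdot\|_2^2$ over blocks in place of your global projection. Their per-block step is exactly your Step 3 combined with their mixture bound $\operatorname{TV}\le 1/4$.
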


\subsection{Proof of Theorem \ref{thm:lower_bound}}
\begin{proofthmlower}
Let $v_1:=\mathbf0^{2k}\in V$ and
$r:=2v_1-\mathbf1=-\mathbf1\in\{\pm1\}^{2k}$. Set
\[
\eta:=\frac{8\epsilon}{\sqrt n}.
\]
The assumed range $\epsilon\le R\sqrt n/16$ ensures $\eta\le R/2$.
For each block $b$, define
\[
\bz_b^{(0)}:=(R-\eta)r,
\qquad
\bz_b^{(1)}:=Rr,
\qquad
\Delta:=\bz_b^{(1)}-\bz_b^{(0)}=\eta r.
\]
Both parameters belong to $[-R,R]^{2k}$, and
\[
\|\Delta\|_2^2=2k\eta^2.
\]

We first bound the KL divergence between the two one-block conditional
distributions. For $t\in[0,1]$, let
$\bz_b^{(t)}:=\bz_b^{(0)}+t\Delta$. Every $v_j\in V\setminus\{v_1\}$
has Hamming weight at least $k$. Hence
\[
\bz_b^{(t)\top}(v_1-v_j)
=
(R-\eta+t\eta)d_H(v_1,v_j)
\ge
\frac{kR}{2}.
\]
Comparing the mass of $v_j$ to that of $v_1$ gives
\[
\mu_{\bz_b^{(t)}\mid V}(v_j)
\le
\exp(-kR/2)
\qquad
\text{for all }j\neq1,\ t\in[0,1].
\]
Since the conditional law on $V$ is an exponential family, the integral
form of Taylor's theorem gives
\[
\begin{aligned}
D_{\mathrm{KL}}\left(
\mu_{\bz_b^{(0)}\mid V}
\middle\|
\mu_{\bz_b^{(1)}\mid V}
\right)
&=
\int_0^1(1-t)
\Var_{\mu_{\bz_b^{(t)}\mid V}}(\Delta^\top X)\,dt\\
&\le
\frac12
\sup_{t\in[0,1]}
\E_{\mu_{\bz_b^{(t)}\mid V}}
\left[(\Delta^\top(X-v_1))^2\right].
\end{aligned}
\]
There are three alternatives to $v_1$, and for each of them
$|\Delta^\top(v_j-v_1)|\le2k\eta$. Therefore
\begin{equation*}
D_{\mathrm{KL}}\left(
\mu_{\bz_b^{(0)}\mid V}
\middle\|
\mu_{\bz_b^{(1)}\mid V}
\right)
\le
6\eta^2k^2e^{-kR/2}.
\label{eq:lower-block-kl}
\end{equation*}
The same calculation, with the interpolation traversed in the opposite
direction, gives the identical bound for the reverse KL divergence.

Now let $B:=n/(2k)$. For
$\omega\in\{0,1\}^B$, define
\[
\bz^\omega
:=
\left(
\bz_1^{(\omega_1)},\ldots,\bz_B^{(\omega_B)}
\right),
\]
and let $P_\omega$ be the corresponding conditional distribution on
$S=V^B$. The block-product structure and
\eqref{eq:lower-block-kl} imply that neighboring
hypotheses satisfy
\begin{equation*}
D_{\mathrm{KL}}\left(
P_\omega^{\otimes N}
\middle\|
P_{\omega\oplus e_b}^{\otimes N}
\right)
\le
6N\eta^2k^2e^{-kR/2}.
\label{eq:lower-neighbor-kl}
\end{equation*}
Thus, if
\begin{equation*}
N
\le
c_0\frac{e^{kR/2}}{\eta^2k^2}
\label{eq:lower-small-sample}
\end{equation*}
for a sufficiently small universal constant $c_0>0$, Pinsker's
inequality bounds the total variation distance in
\eqref{eq:lower-neighbor-kl} by $1/4$.
By convexity of total variation, the same bound holds between the two
mixtures obtained by fixing $\omega_b=0$ and $\omega_b=1$ and averaging
uniformly over the remaining bits.

It remains to convert this testing obstruction into the claimed
high-probability estimation obstruction. Given an arbitrary estimator
$\widehat{\bz}$, define $\widehat\omega_b$ by choosing the closer of
$\bz_b^{(0)}$ and $\bz_b^{(1)}$ to the $b$-th block of
$\widehat{\bz}$. The two-point testing inequality and the preceding
mixture-TV bound imply, for uniform $\omega$,
\[
\E_\omega d_H(\widehat\omega,\omega)
\ge
\frac{3B}{8}.
\]
Since $d_H(\widehat\omega,\omega)\le B$, it follows that
\[
\Pr_\omega\left(
d_H(\widehat\omega,\omega)\ge\frac{B}{8}
\right)
\ge
\frac{2}{7}.
\]
Whenever block $b$ is decoded incorrectly, the triangle inequality and
the nearest-neighbor definition give
\[
\|\widehat{\bz}_b-\bz_b^{(\omega_b)}\|_2
\ge
\frac12\|\Delta\|_2.
\]
Consequently, on the preceding event,
\[
\|\widehat{\bz}-\bz^\omega\|_2^2
\ge
\frac{B}{8}\cdot\frac{\|\Delta\|_2^2}{4}
=
\frac{n\eta^2}{32}
=
2\epsilon^2.
\]
After averaging over $\omega$, there must therefore be at least one
$\omega$ for which the estimator has error greater than $\epsilon$ with
probability at least $2/7$. In particular, uniform success probability
$0.99$ is impossible under \eqref{eq:lower-small-sample}. Substituting
$\eta^2=64\epsilon^2/n$ into the negation of
\eqref{eq:lower-small-sample} proves
\[
N
\ge
c\frac{n e^{kR/2}}{k^2\epsilon^2}
\]
for a universal constant $c>0$.
\end{proofthmlower}

\end{document}